\definecolor{Gray}{gray}{0.85}
\definecolor{Gray1}{gray}{0.9}
\newtheorem{lemma}{Lemma}
\newtheorem*{prop1*}{Property of Convergence}
\newtheorem*{prop2*}{Property of Monotonicity}
\newtheorem{prop}{Property}
\begin{document}

\title{MagFace: A Universal Representation for \\ Face Recognition and Quality Assessment}

\author{Qiang Meng, Shichao Zhao, Zhida Huang, Feng Zhou\\
  Algorithm Research, Aibee Inc. \\
 {\tt \small \{qmeng,sczhao,zdhuang,fzhou\}@aibee.com}
}

\maketitle
\pagestyle{empty}
\thispagestyle{empty}

\begin{abstract}

  The performance of face recognition system degrades when the variability of the acquired faces increases.
  Prior work alleviates this issue by either monitoring the face quality in pre-processing or predicting the data uncertainty along with the face feature.
  This paper proposes MagFace, a category of losses that learn a universal feature embedding whose magnitude can measure the quality of the given face.
  Under the new loss, it can be proven that the magnitude of the feature embedding monotonically increases if the subject is more likely to be recognized.
  In addition, MagFace introduces an adaptive mechanism to learn a well-structured within-class feature distributions by pulling easy samples to class centers while pushing hard samples away.
  This prevents models from overfitting on noisy low-quality samples and improves face recognition in the wild.
  Extensive experiments conducted on face recognition, quality assessments as well as clustering demonstrate its superiority over state-of-the-arts.
  The code is available at \url{https://github.com/IrvingMeng/MagFace}.


\end{abstract}

\section{Introduction}
\vspace{-1pt}

\begin{figure}[!htb]
  \centering
  \subfloat[\label{fig:intro0}]{\includegraphics[width=0.24\textwidth]{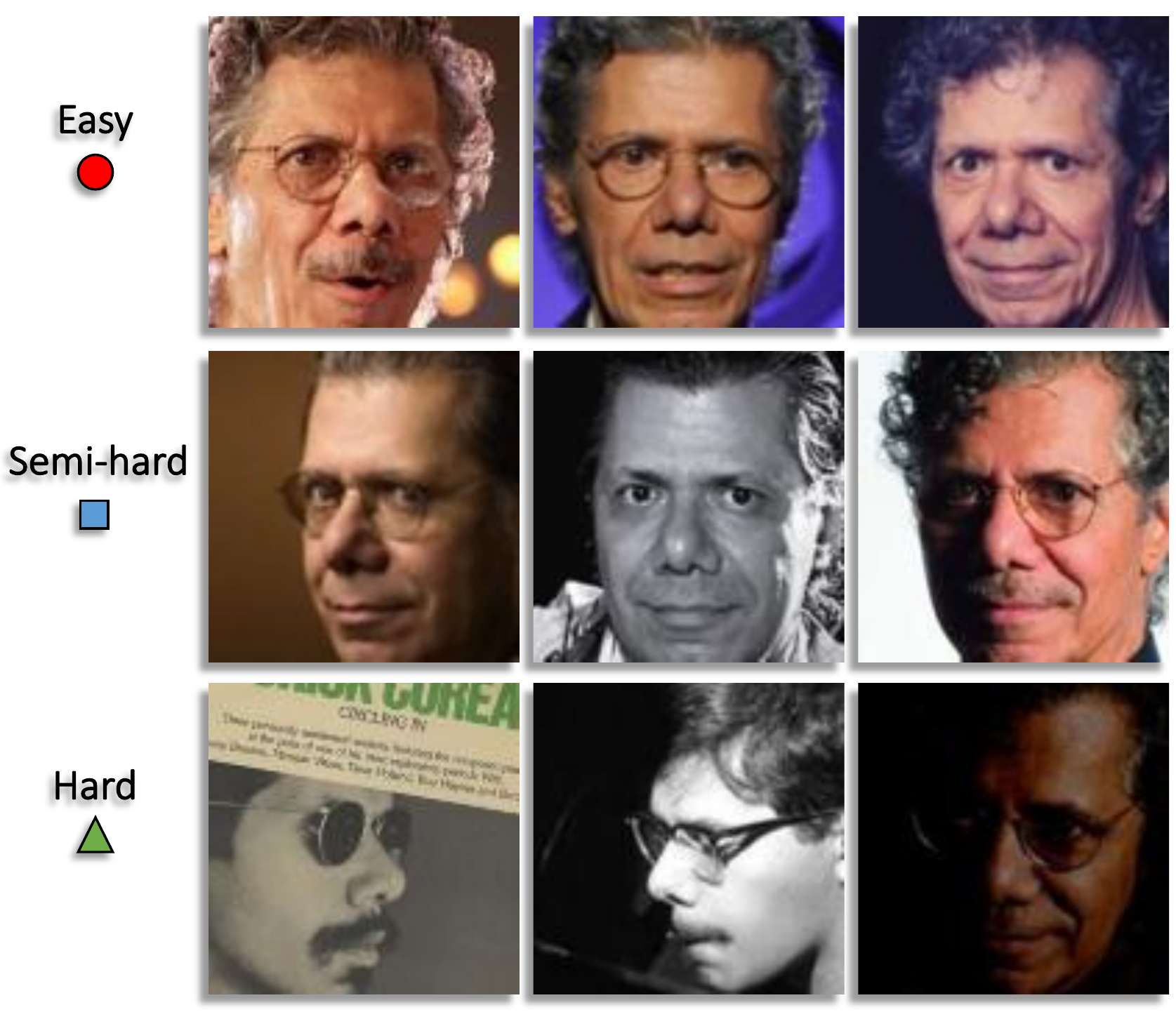}}
  \subfloat[\label{fig:intro1}]{\includegraphics[width=0.24\textwidth]{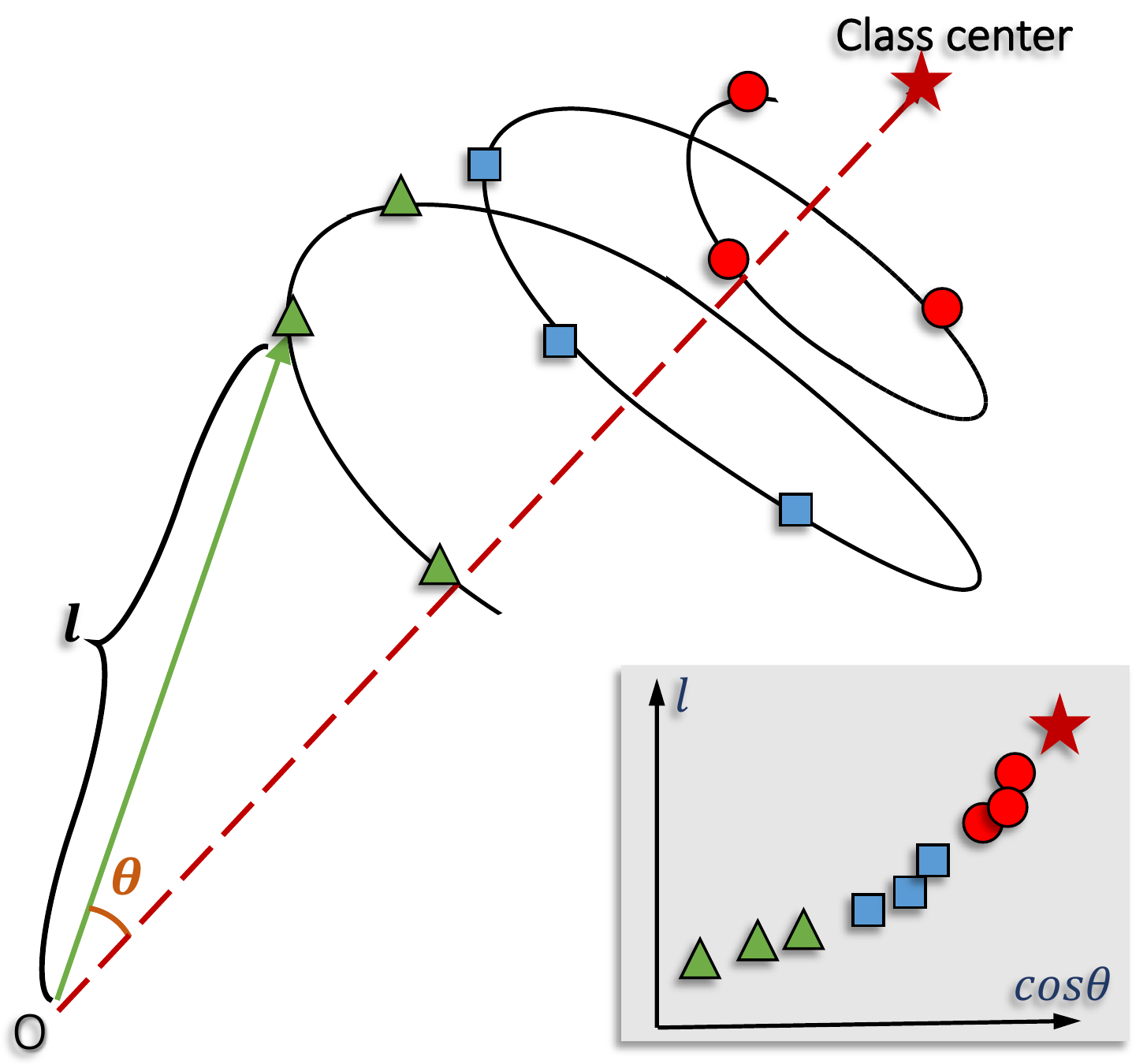}} 
  \caption{MagFace learns for (a) in-the-wild faces (b) a universal embedding by pulling the easier samples closer to the class center and pushing them away from the origin $o$
    .
    As shown in our experiments and supported by mathematical proof, the magnitude $l$ before normalization increases along with feature's cosine distance to its class center, and therefore reveals the quality for each face.
    The larger the $l$, the more likely the sample can be recognized.}
  \label{fig:intro}
\end{figure}

Recognizing face in the wild is difficult mainly due to the large variability exhibited by face images acquired in unconstrained settings.
This variability is associated to the image acquisition conditions (such as illumination, background, blurriness, and low resolution), factors of the face (such as pose, occlusion and expression) or biases of the deployed face recognition system~\cite{Kolf1}.
To cope with these challenges, most relevant face analysis system under unconstrained environment (\eg, surveillance video) consists of three stages:
1) \textbf{face acquisition} to select from a set of raw images or capture from video stream the most suitable face image for recognition purpose;
2) \textbf{feature extraction} to extract discriminative representation from each face image;
3) \textbf{facial application} to match the reference image towards a given gallery or cluster faces into groups of same person.

To acquire the optimal reference image in the first stage, a technique called face quality assessment~\cite{Best-Rowden2017,schlett2020face} is often employed on each detected face.
Although the ideal quality score should be indicative of the face recognition performance, most of early work~\cite{ISO, ICAO} estimates qualities based on human-understandable factors such as luminances, distortions and pose angles, which may not directly favor the face feature learning in the second stage.
Alternatively, learning-based methods~\cite{Best-Rowden2017, Hernandez-ortega} train quality assessment models with artificially or human labelled quality values.
Theses methods are error-prone as there lacks of a clear definition of quality and human may not know the best characteristics for the whole systems.

To achieve high end-to-end application performances in the second stage, various metric-learning~\cite{schroff2015facenet, sohn2016improved} or classification losses~\cite{Yuan2018,Ranjan2017,Liu2017,guo2020learning, Wang2018,Deng2018,cao2020domain} emerged in the past few years.
These works learn to represent each face image as a deterministic point embedding in the latent space regardless of the variance inherent in faces.
In reality, however, low-quality or large-pose images like Fig.~\ref{fig:intro0} widely exist and their facial features are ambiguous or absent.
Given these challenges, a large shift in the embedded points is inevitable, leading to false recognition.
For instance, performance reported by prior state-of-the-art~\cite{Shi2019} on IJB-C is much lower than LFW.
Recently, confidence-aware methods~\cite{Shi2019,Chang2020} propose to represent each face image as a Gaussian distribution in the latent space, where the mean of the distribution estimates the most likely feature values while the variance shows the uncertainty in the feature values.
Despite the performance improvement, these methods seek to separate the face feature learning from data noise modeling.
Therefore, additional network blocks are introduced in the architecture to compute the uncertainty level for each image.
This complicates the training procedure and adds computational burden in inference.
In addition, the uncertainty measure cannot be directed used in conventional metrics for comparing face features.



This paper proposes MagFace to learn a universal and quality-aware face representation.
The design of MagFace follows two principles:
1) Given the face images of the same subject but in different levels of quality (\eg, Fig.~\ref{fig:intro0}), it seeks to learn a within-class distribution, where the high-quality ones stay close to the class center while the low-quality ones are distributed around the boundary.
2) It should pose the minimum cost for changing existing inference architecture to measure the face quality along with the computation of face feature.
To achieve the above goals, we choose magnitude, the independent property to the direction of the feature vector, as the indicator for quality assessment.
The core objective of MagFace is to not only enlarge inter-class distance, but also maintain a cone-like within-class structure like Fig.~\ref{fig:intro1}, where 
ambiguous samples are pushed away from the class centers and pulled to the origin.
This is realized by adaptively down-weighting ambiguous samples during training and rewarding the learned feature vector with large magnitude in the MagFace loss.
To sum up, MagFace improves previous work in two aspects:

\begin{enumerate}
  \itemsep0em

\item For the first time, MagFace explores the complete set of two properties associated with feature vector, direction and magnitude, in the problem of  face recognition while previous works often neglect the importance of the magnitude by normalizing the feature.
  With extensive experimental study and solid mathematical proof, we show that the magnitude can reveal the quality of faces and can be bundled with the characteristics of recognition without any quality labels involved.

\item MagFace explicitly distributes features structurally in the angular direction (as shown in Fig.~\ref{fig:intro1}).
By dynamically assigning angular margins based on samples' hardness for recognition, MagFace prevents model from overfitting on noisy and low-quality samples and learns a well-structured distributions that are more suitable for recognition and clustering purpose.




\end{enumerate}

\section{Related Works}
\vspace{-1pt}

\begin{figure*}[!htb]
  \centering
  \subfloat[]{\includegraphics[width=0.25\textwidth]{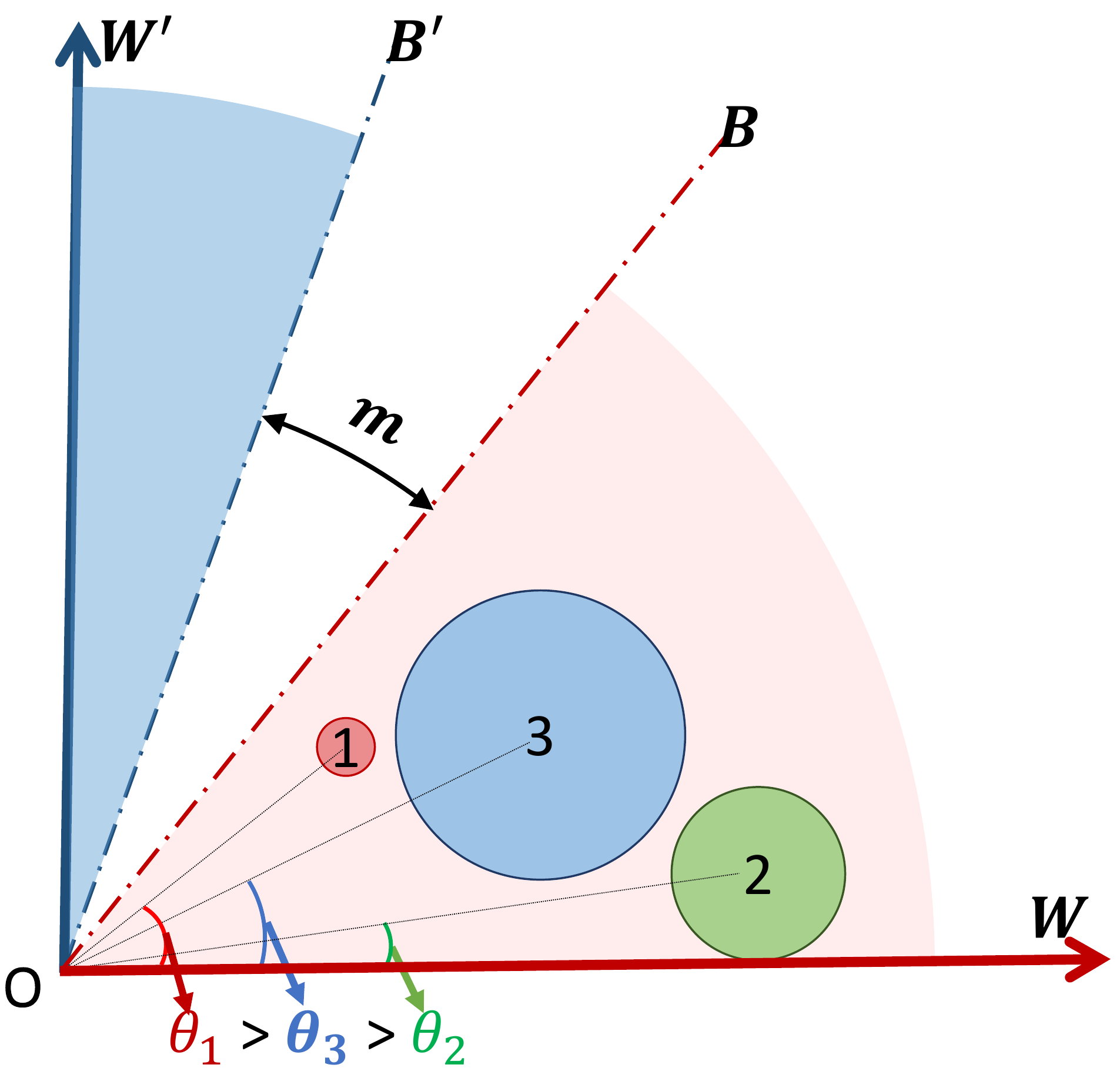}\label{fig:method0}}
  \subfloat[]{\includegraphics[width=0.25\textwidth]{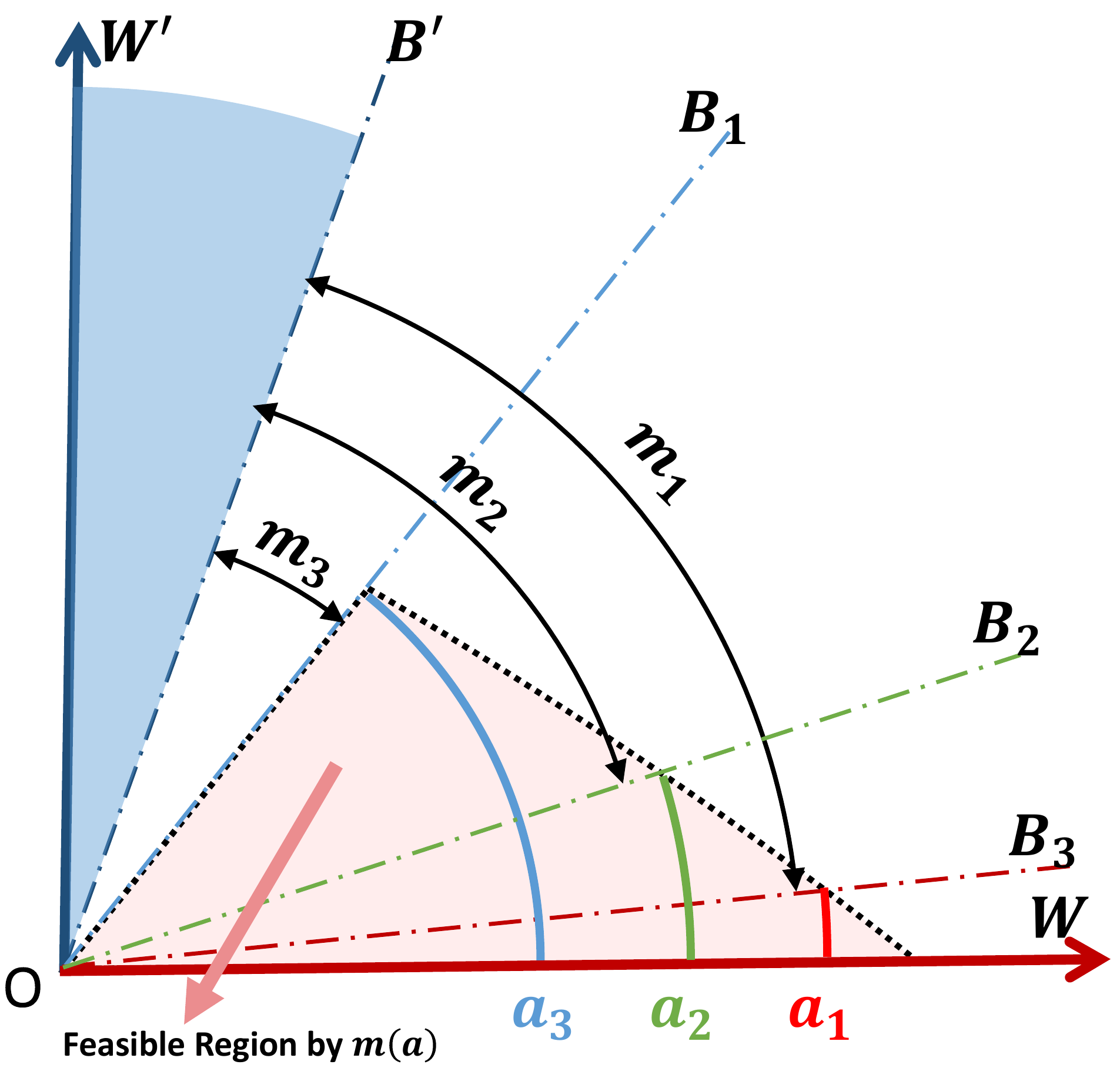}\label{fig:method1}}
  \subfloat[]{\includegraphics[width=0.25\textwidth]{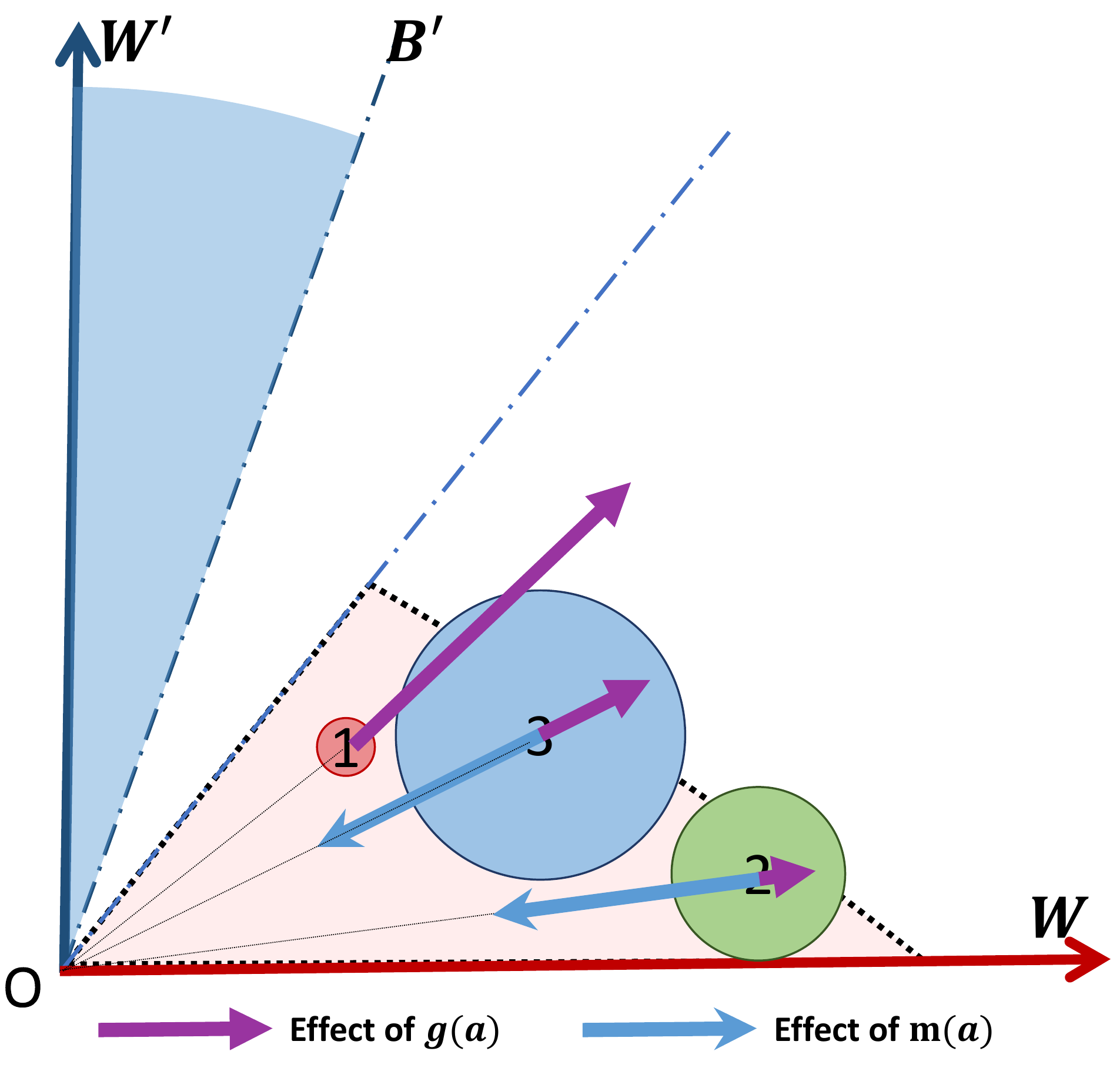}\label{fig:method2}}
  \subfloat[]{\includegraphics[width=0.25\textwidth]{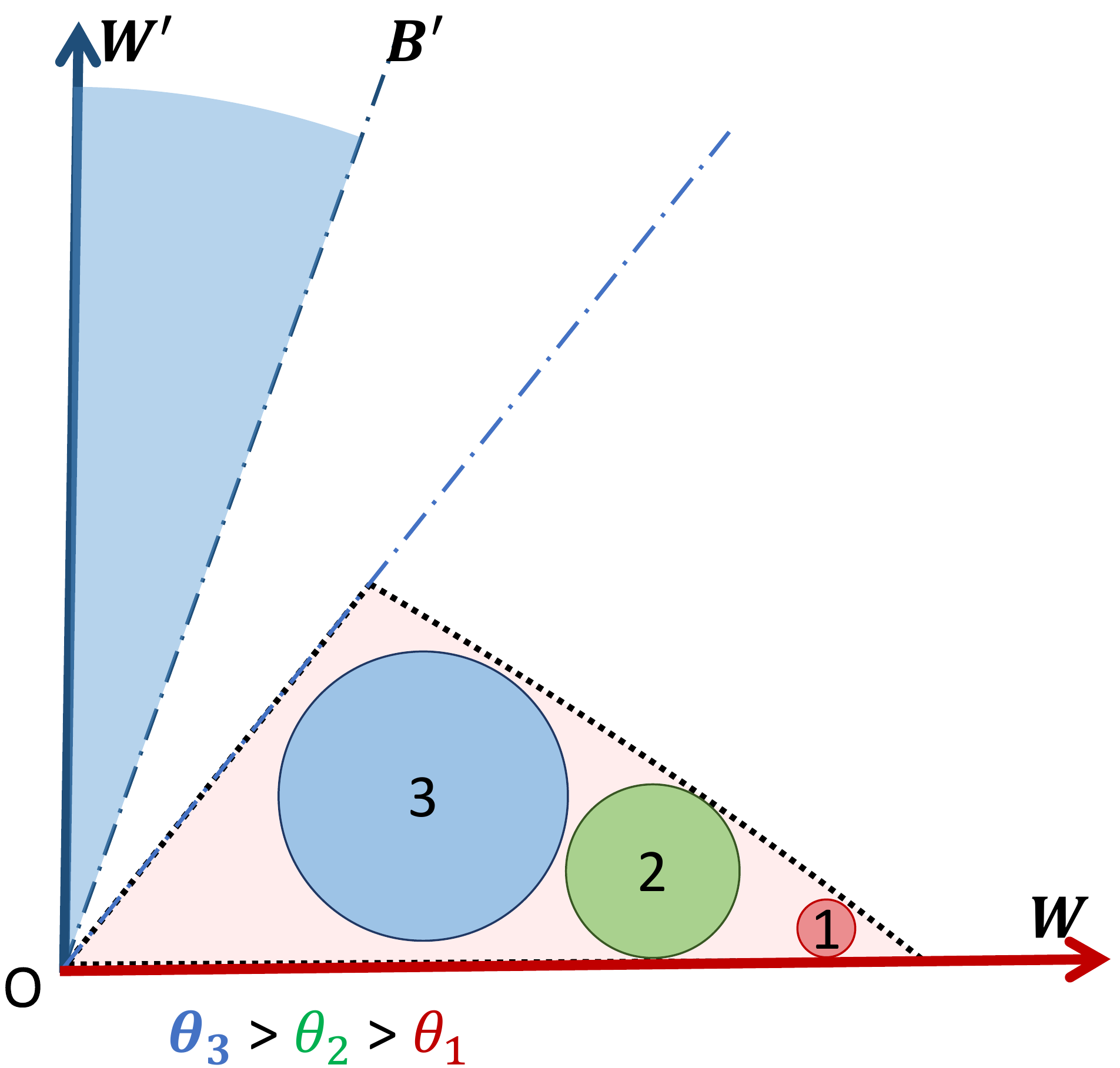}\label{fig:method3}}
  \caption{Geometrical interpretation of the feature space (without normalization) optimized by ArcFace and MagFace.
    (a) Two-class distributions optimized by ArcFace, where $w$ and $w'$ are the class centers and their decision boundaries $B$ and $B'$ are separated by the additive margin $m$.
    Circle 1, 2, 3 represent three types samples of class $w$ with descending qualities.
    (b) MagFace introduces $m(a_i)$ which dynamically adjust boundaries based on feature magnitudes, and ends to a new feasible region.
    (c) Effects of $g(a_i)$ and $m(a_i)$.
    (d) Final feature distributions of our MagFace.
    \textbf{Best viewed in color.}}
  \label{fig:method}
\end{figure*}

\subsection{Face Recognition}
\vspace{-3pt}
Recent years have witnessed the breakthrough of deep convolutional face recognition techniques.
A number of successful systems, such as DeepFace~\cite{taigman2014deepface}, DeepID~\cite{sun2015deepid3}, FaceNet~\cite{schroff2015facenet} have shown impressive performance on face identification and verification.
Apart from the large-scale training data and deep network architectures, the major advance comes from the evolution of training losses for CNN.
Most of early works rely on metric-learning based loss, including contrastive loss~\cite{chopra2005learning}, triplet loss~\cite{schroff2015facenet}, n-pair loss~\cite{sohn2016improved}, angular loss~\cite{wang2017deep}, \etc.
Suffering from the combinatorial explosion in the number of face triplets, embedding-based method is usually inefficient in training on large-scale dataset.
Therefore, the main body of research in deep face recognition has focused on devising more efficient and effective classification-based loss.
Wen \etal~\cite{wen2016discriminative} develop a center loss to learn centers for each identity to enhance the intra-class compactness.
$L_2$-softmax~\cite{Ranjan2017} and NormFace~\cite{Wang2017} study the necessity of the normalization operation and applied $L_2$ normalization constraint on both features and weights.
From then on, several angular margin-based losses, such as SphereFace~\cite{Liu2017}, AM-softmax~\cite{wang2018additive}, SV-AM-Softmax~\cite{Wang2018_sv}, CosFace~\cite{Wang2018}, ArcFace~\cite{Deng2018}, progressively improve the performance on various benchmarks to the newer level.
More recently, AdaptiveFace~\cite{liu2019adaptiveface}, AdaCos~\cite{zhang2019adacos} and FairLoss~\cite{liu2019fair} introduce adaptive margin strategy to automatically tune hyperparameters and generate more effective supervisions during training.
Compared to our method, all these work tend to suppress the effect of magnitude in the loss by normalizing the feature vector.



\subsection{Face Quality Assessment}
\vspace{-3pt}
Face image quality is an important factor to enable high-performance face recognition systems~\cite{Best-Rowden2017}.
Traditional methods, such as ISO/IEC 19794-5 standard~\cite{ISO}, ICAO 9303 standard~\cite{ICAO}, Brisque~\cite{sun2015no}, Niqe~\cite{Mittal2013} and  Piqe~\cite{venkatanath2015blind}, describe qualities from image-based aspects (\eg, distortion, illumination and occlusion) or subject-based measures (\eg, accessories).
Learning-based approaches such as FaceQNet~\cite{Hernandez-ortega} and Best-Rowden~\cite{Best-Rowden2017}  regress qualities by networks trained on human-assessed and similarity-based labels.
However, these quality labels are error-prone as human may not know the best characteristics for the recognition system and therefore cannot consider all proper factors.
Recently, several uncertainty-based methods are proposed to express face qualities by the uncertainties of features. SER-FIQ~\cite{Kolf1} forwards an image to a network with dropout several times and  measures face quality by the variation of extracted features.
Confidence-aware face recognition methods~\cite{Shi2019,Chang2020} propose to represent each face image as a Gaussian distribution in the latent space and learn the uncertainty in the feature values.
Although these methods work in an unsupervised manner like ours, they require additional computational costs or network blocks, which complicate their usage in conventional face systems.

\subsection{Face Clustering}
\vspace{-3pt}
Face clustering exploits unlabeled data to cluster them into pseudo classes.
Traditional clustering methods usually work in an unsupervised manner, such as K-means~\cite{lloyd1982least}. DBSCAN~\cite{ester1996density} and hierarchical clustering.
Several supervised clustering methods based on graph convolutional network (GCN) are proposed recently.
For example, L-GCN~\cite{wang2019linkage} performs reasoning and infers the likelihood of linkage between pairs in the sub-graphs.
Yang \etal~\cite{yang2020learning} designs two graph convolutional networks, named GCN-V and GCN-E, to estimate the confidence of vertices and the connectivity of edges, respectively.
Instead of developing clustering methods, we aim at improving feature distribution structure for clustering.

\section{Methodology}
\vspace{-1pt}

\begin{figure*}[h]
  \centering
  \subfloat[Softmax]{\includegraphics[width=0.33\textwidth]{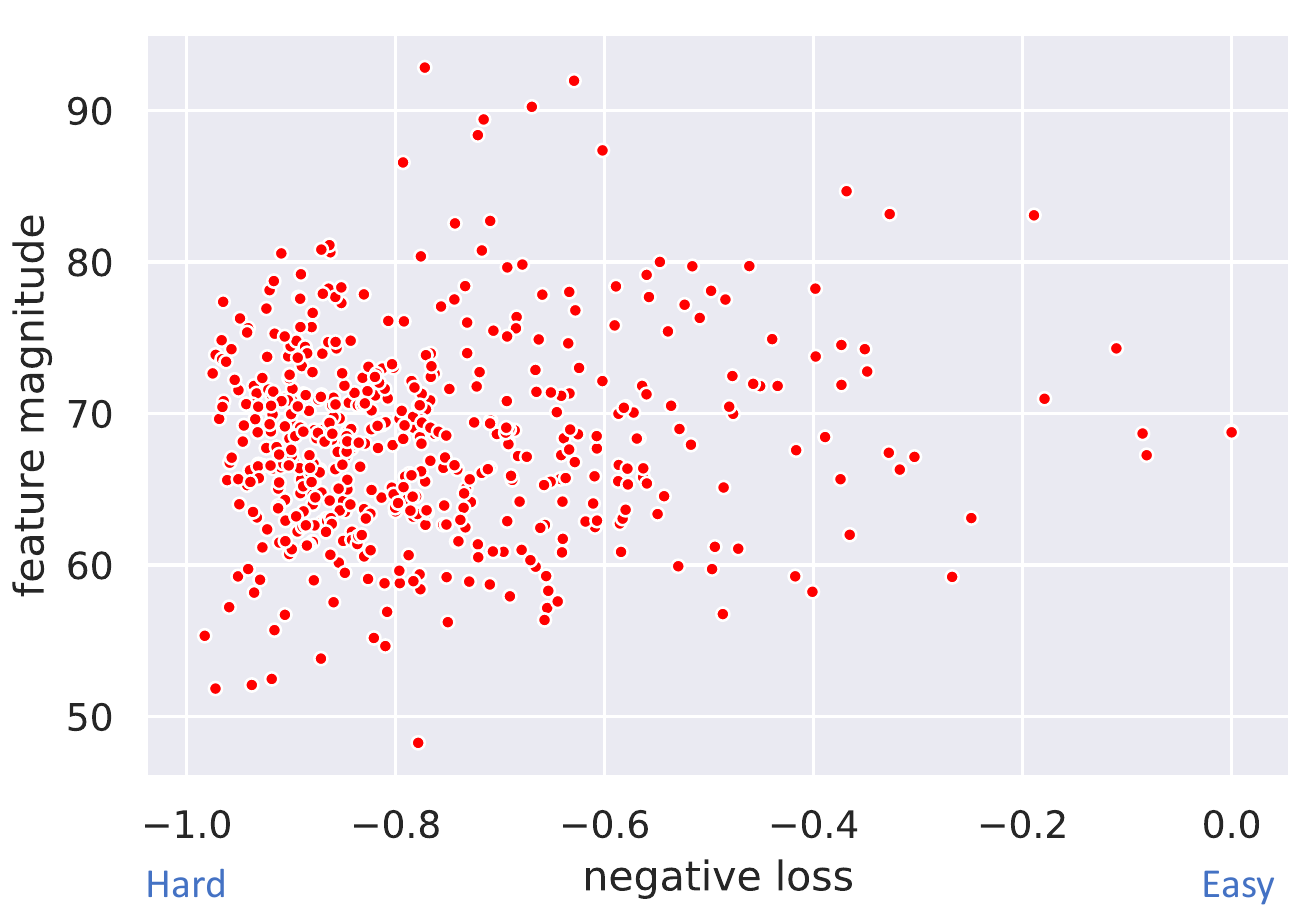}\label{fig:data1}}
  \subfloat[ArcFace]{\includegraphics[width=0.33\textwidth]{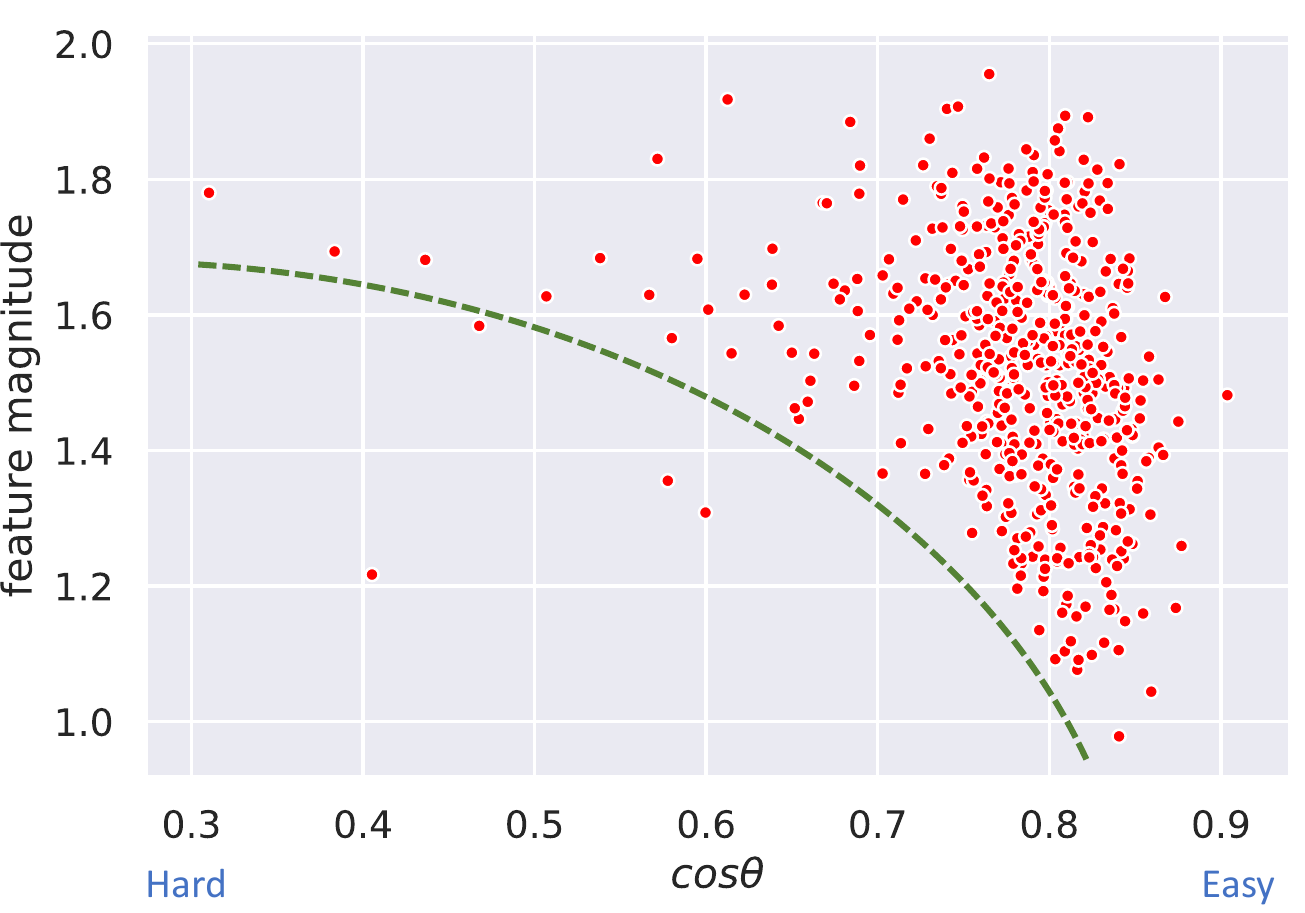}\label{fig:data2}}
  \subfloat[MagFace]{\includegraphics[width=0.33\textwidth]{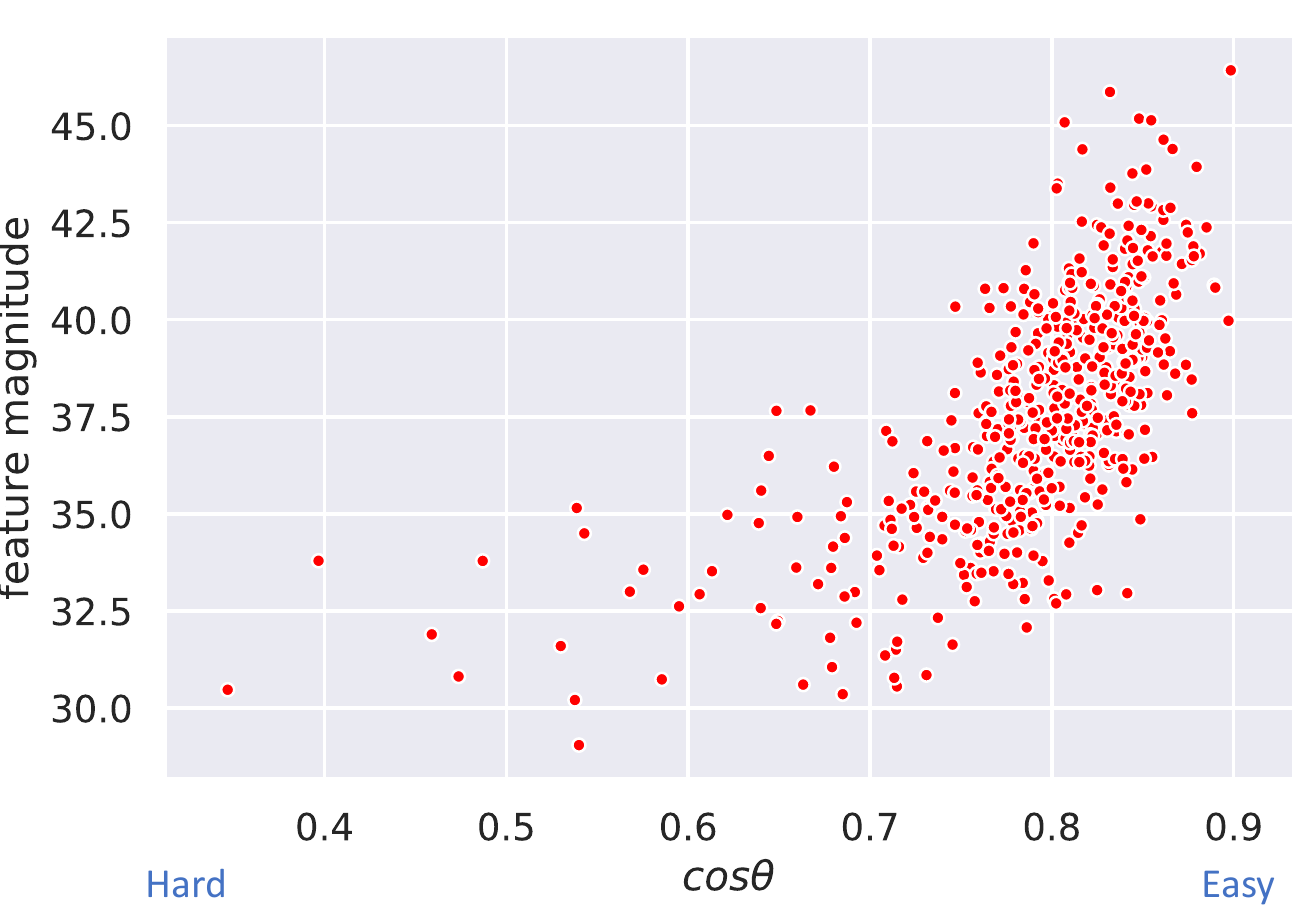}\label{fig:data3}}
  \caption{Visualization of feature magnitudes and  difficulties for recognition.
    Models are trained on MS1M-V2~\cite{guo2016ms,Deng2018} and 512 samples of the last iteration are used for visualization.
    Negative losses are used to reveal the hardness for Softmax while we use cosine value of $\theta$ (angle between a feature and its class center) for ArcFace and MagFace.}
  \label{fig:data_example}
\end{figure*}

In this section, we first review the definition of ArcFace~\cite{Deng2018}, one of the most popular losses used in face recognition.
Based on the analysis of ArcFace, we then derive the objective and prove the key properties for MagFace.
In the end, we compare softmax and ArcFace with MagFace from the perspective of feature magnitude.

\subsection{ArcFace Revisited}
\vspace{-3pt}

Training loss plays an important role in face representation learning.
Among the various choices (see \cite{du2020elements} for a recent survey), ArcFace~\cite{Deng2018} is perhaps the most widely adopted one in both academy and industry application due to its easiness in implementation and state-of-the-art performance on a number of benchmarks.
Suppose that we are given a training batch of $N$ face samples $\{f_i, y_i\}_{i=1}^N$ of $n$ identities, where $f_i \in \mathbb{R}^d$ denotes the $d$-dimensional embedding computed from the last fully connected layer of the neural networks and $y_i = 1, \cdots, n$ is its associated class label.
ArcFace and other variants improve the conventional softmax loss by optimizing the feature embedding on a hypersphere manifold where the learned face representation is more discriminative.
By defining the angle $\theta_j$ between $f_i$ and $j$-th class center $w_j \in \mathbb{R}^d$ as $w_j^T f_i = \|w_j\|\|f_i\|\cos\theta_j$, the objective of ArcFace~\cite{Deng2018} can be formulated as
\small
\begin{align}
    L = -\frac{1}{N}\sum_{i=1}^{N} \log \frac{e^{s\cos{(\theta_{y_i}+ m)}}}{ e^{s\cos{(\theta_{y_i}+m)}} + \sum_{j\neq y_i} e^{s \cos{\theta_j}}},
  \label{eq:arcloss}
\end{align}
\normalsize
where $m>0$ denotes the additive angular margin and $s$ is the scaling parameter.

Despite its superior performances on enforcing intra-class compactness and inter-class discrepancy, the angular margin penalty $m$ used by ArcFace is quality-agnostic and the resulting structure of the within-class distribution could be arbitrary in unconstrained scenarios.
For example, let us consider the scenario illustrated in Fig.~\ref{fig:method0}, where we have face images of the same class in three levels of qualities indicated by the circle sizes: the larger the radius, the more uncertain the feature representation and the more difficulty the face can be recognized.
Because ArcFace employs a uniform margin $m$, each image in one class shares the same decision boundary, \ie, $B: \cos(\theta + m) = \cos(\theta^{\prime})$ with respect to the neighbor class.
The three types of samples can stay at arbitrary location in the feasible region (shading area in Fig.~\ref{fig:method0}) without any penalization by the angular margin.
This leads to unstable within-class distribution, \eg, the high-quality face (type 1) stay along the boundary $B$ while the low-quality ones (type 2 and 3) are closer to the center $w$.
This unstableness can hurt the performances on in-the-wild recognition as well as other facial application such as face clustering.
Moreover, hard and noisy samples are over-weighted as they are hard to stay in the feasible area and the models may overfit to them.

\subsection{MagFace} \label{subsec:magface}
\vspace{-3pt}

Based on the above analysis, previous cosine-similarity-based face recognition loss lacks more fine-grained constraint beyond a fixed margin $m$.
This leads to unstable within-class structure especially in the unconstrained case (\eg, Fig.~\ref{fig:method0}) where the variability of each subject's faces is large.
To address the aforementioned problem, this section proposes MagFace, a novel framework to encode quality measure into the face representation.
Unlike previous work \cite{Shi2019,Chang2020} that call for additional uncertainty term, we pursue a minimalism design by optimizing over the magnitude $a_i = \|f_i\|$ without normalization of each feature $f_i$.
Our design has two major advantages:
1) We can keep using the cosine-based metric that has been widely adopted by most existing inference systems;
2) By simultaneously enforcing its direction and magnitude, the learned face representation is more robust to the variability of faces in the wild.
To our best understanding, this is the first work to unify the feature magnitude as quality indicator in face recognition.

Before defining the loss, let us first introduce two auxiliary functions related to $a_i$, the magnitude-aware angular margin $m(a_i)$ and the regularizer $g(a_i)$.
The design of $m(a_i)$ follows a natural intuition: for high-quality samples $x_i$, they should concentrate in a small region around the cluster center $w$ with high certainty.
By assuming a positive correlation between the magnitude and quality, we thereby penalize more on $x_i$ in terms of $m(a_i)$ if its magnitude $a_i$ is large.
To have a better understanding, Fig.~\ref{fig:method1} visualizes the margins $m(a_i)$ corresponding to different magnitude values.
In contrast to ArcFace (Fig.~\ref{fig:method0}), the feasible region defined by $m(a_i)$ has a shrinking boundary with respect to feature magnitude towards the class center $w$.
Consequently, this boundary pulls the low-quality samples (circle 2 and 3 in Fig.~\ref{fig:method2}) to the origin where they have lower risk to be penalized.
However, the structure formed solely by $m(a_i)$ is unstable for high-quality samples like circle 1 in Fig.~\ref{fig:method2} as they have large freedom moving inside the feasible region.
We therefore introduce the regularizer $g(a_i)$ that rewards sample with large magnitude.
By designing $g(a_i)$ as a monotonically decreasing convex function with respect to $a_i$, each sample would be pushed towards the boundary of the feasible region and the high-quality ones (circle 1) would be dragged closer to the class center $w$ as shown in Fig.~\ref{fig:method3}.
In a nutshell, MagFace extends ArcFace (Eq.~\ref{eq:arcloss}) with magnitude-aware margin and regularizer to enforce higher diversity for inter-class samples and similarity for intra-class samples by optimizing:
\small
\begin{align}
    L_{Mag} &= \frac{1}{N}\sum_{i=1}^{N} L_i, \quad \text{where} \label{eq:magloss} \\
    L_i &= -\log \frac{e^{s\cos{(\theta_{y_i}+ m(a_i))}}}{ e^{s\cos{(\theta_{y_i}+m(a_i))}} + \sum_{j\neq y_i} e^{s \cos{\theta_j}}} + \lambda_g g(a_i). \nonumber
\end{align}
\normalsize
The hyper-parameter $\lambda_g$ is used to trade-off between the classification and regularization losses.



The design of MagFace not only follows intuitive motivations, but also yields result with theoretical guarantees.
Assuming the magnitude $a_i$ is bounded in $[l_a, u_a]$, where $m(a_i)$ is a strictly increasing convex function, $g(a_i)$ is a strictly decreasing convex function and $\lambda_g$ is large enough, we can prove (see detailed requirements and proofs in the supplementary) that the following two properties of MagFace loss always hold when optimizing $L_i$ over $a_i$:




\begin{prop1*}
  For $a_i \in [l_a, u_a]$, $L_i$ is a strictly convex function which has a unique optimal solution $a_i^*$.
\end{prop1*}

\begin{prop2*}
   The optimal $a_i^*$ is monotonically increasing as the cosine-distance to its class center decreases and the cos-distances to other classes increase.
\end{prop2*}

The property of convergence guarantees the unique optimal solution for $a_i$ as well as the fast convergence. The property of monotonicity states that the feature magnitudes reveal the difficulties for recognition, therefore can be treated as a metric for face qualities.


\subsection{Analysis on Feature Magnitude}
\label{subsec:magnitude}
\vspace{-3pt}

To better understand the effect of the MagFace loss, we conduct experiments on the widely used MS1M-V2~\cite{Deng2018} dataset and investigate for the training examples at convergence the relation between the feature magnitude and their similarity with class center as shown in Fig.~\ref{fig:data_example}.


\textbf{Softmax.} The classical softmax-based loss underlies the objective of the pioneer work~\cite{taigman2014deepface,sun2014deep} on deep face recognition.
Without explicit constraint on magnitude, the value of the negative loss for each sample is almost independent to its magnitude as observed from Fig.~\ref{fig:data1}.
As pointed in \cite{Ranjan2017,Wang2017}, softmax tends to create a radial feature distribution because softmax loss acts as the soft version of max operator and scaling the feature magnitude does not affect the assignment of its class.
To eliminate this effect, \cite{Ranjan2017,Wang2017} suggest that using normalized feature would benefit the task.


\textbf{ArcFace.} ArcFace 
can be considered as a special case of MagFace 
when $m(a_i) = m$ and $g(a_i)=0$.
As shown in Fig.~\ref{fig:data2}, high-quality samples with large similarity $\cos(\theta)$ to class center yield large variation in magnitude.
This evidence echos our motivation on the unstable structure defined by a fixed angular margin in ArcFace for easy samples.
On the other hand, for low-quality samples that are difficult to be recognized ($\cos(\theta)$ is small), the fixed angular margin determines the magnitude needs to be large enough in order to fit inside the feasible region (Fig.~\ref{fig:method0}).
Therefore, there is a decreasing low bound for feature magnitudes \wrt the quality of face as indicated by the dash line in Fig.~\ref{fig:data2}.


\textbf{MagFace.} In contrast to ArcFace, our MagFace optimizes the feature with adaptive margin and regularization based on its magnitude.
Under this loss, it is clear to observe from Fig.~\ref{fig:data3} that there is a strong correlation between the feature magnitudes and their cosine similarities with class center.
Those examples at the upper-right corner are the most high-quality ones.
As the magnitude becomes smaller, the examples are more deviated from the class center.
This distribution strongly supports the fact that the feature magnitude learned by MagFace is a good metric for face quality.

\section{Experiments}
\vspace{-1pt}

\begin{figure*}[htb!]
  \centering
  \subfloat[][mean: 22.84 \\ range: (-$\infty$, 24) \\ \# of faces: 3692 ]{\includegraphics[width=0.12\textwidth]{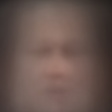}}
  \subfloat[][mean: 25.13 \\ range: [24, 26) \\ \# of faces: 9955 ]{\includegraphics[width=0.12\textwidth]{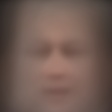}}
  \subfloat[][mean: 27.03 \\ range: [26, 28) \\ \# of faces: 15459]{\includegraphics[width=0.12\textwidth]{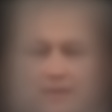}}
  \subfloat[][mean: 29.03 \\ range: [28, 30) \\ \# of faces: 17565]{\includegraphics[width=0.12\textwidth]{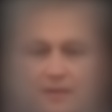}}
  \subfloat[][mean: 31.01 \\ range: [30, 32) \\ \# of faces: 20627]{\includegraphics[width=0.12\textwidth]{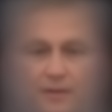}}
  \subfloat[][mean: 32.99 \\ range: [32, 34) \\ \# of faces: 19743]{\includegraphics[width=0.12\textwidth]{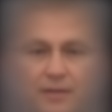}}
  \subfloat[][mean: 34.80 \\ range: [34, 36) \\ \# of faces: 11238]{\includegraphics[width=0.12\textwidth]{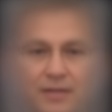}}
  \subfloat[][mean: 36.55 \\ range: [36, $\infty$) \\ \# of faces: 1721]{\includegraphics[width=0.12\textwidth]{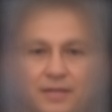}}
  \vspace{4pt}
  \caption{Visualization of the mean faces of 100k images sampled from the IJB-C dataset.
    Each mean face corresponds to a group of faces based on the magnitude level of the features learned by MagFace.
  }
  \label{fig:quality}
\end{figure*}


In this section, we examine the proposed MagFace on three important face tasks: face recognition, quality assessment and face clustering. 
Sec.~C in the supplementary presents the ablation study on relationships between margin distributions and recognition performances.


\subsection{Face Recognition} \label{sec:exp_fr}
\vspace{-3pt}

\noindent
\textbf{Datasets.} The original MS-Celeb-1M dataset~\cite{guo2016ms} contains about 10 million images of 100k identities.
However, it consists of a great many noisy face images.
Instead, we employ MS1M-V2~\cite{Deng2018} (5.8M images, 85k identities) as our training dataset.
For evaluation, we adopt LFW~\cite{huang2008labeled}, CFP-FP~\cite{sengupta2016frontal}, AgeDB-30~\cite{moschoglou2017agedb}, CALFW~\cite{zheng2017cross}, CPLFW~\cite{zheng2018cross}, IJB-B~\cite{whitelam2017iarpa} and IJB-C~\cite{maze2018iarpa} as the benchmarks.
All the images are aligned to $112\times 112$ by following the setting in ArcFace.

\noindent
\textbf{Baselines.} We re-implement state-of-the-art baselines including Softmax, SV-AM-Softmax~\cite{Wang2018_sv}, SphereFace~\cite{Liu2017}, CosFace~\cite{Wang2018}, ArcFace~\cite{Deng2018}.
ResNet100 is equipped as the backbone.
We use the recommended hyperparameters for each model, \eg, $s=64$, $m=0.5$ for ArcFace. 

\noindent
\textbf{Training.} We train models on 8 1080Tis by stochastic gradient descent.
The learning rate is initialized from 0.1 and divided by 10 at 10, 18, 22 epochs, and we stop the training at the 25th epoch.
The weight decay is set to 5e-4 and the momentum is 0.9.
We only augment training samples by random horizontal flip.
For MagFace, we fix the upper bound and lower bound of the magnitude as $l_a=10, u_a=110$.
$m(a_i)$ is chosen to be a linear function and $g(a_i)$ as a hyperbola.
For detailed definition of $m(a_i)$, $g(a_i)$ and $\lambda_g$, please refer to Sec.~B2 in the supplementary.
In the end, our mean margin as well as other hyperparameters are all consistent with ArcFace.

\noindent
\textbf{Test.} During testing, cosine distance is used as metric on comparing 512-D features.
For evaluations on IJB-B/C, one identity can have multiple images.
The common way to represent for an identity is to sum up the normalized feature $f_i^{norm} = \frac{f_i}{\|f_i\|}$ of each image and then normalize the embedding for comparisons, \ie, $f=\frac{\sum_i f_i^{norm}}{\| \sum_i f_i^{norm}  \|}$.
One benefit of MagFace is that we can assign quality-aware weight $\|f_i\|$ to each normalized feature $f_i^{norm}$.
Therefore, we further evaluate ``MagFace+'' in Tab.~\ref{table:fr_ijb} by computing the identity embedding as $f_+=\frac{\sum_i f_i}{\| \sum_i f_i  \|}$.

\setlength{\tabcolsep}{4pt}
\begin{table}
  \begin{center}
    \footnotesize{
      \begin{tabular}{l|ccccc}
        \hline
        Method &  LFW & CFP-FP & AgeDB-30 & CALFW & CPLFW  \\
        \hline \hline
        Softmax & 99.70 & 98.20 & 97.72 & 95.65 & 92.02 \\
        SV-AM-Softmax~\cite{Wang2018_sv}& 99.50 &  95.10 &  95.68 & 94.38 & 89.48 \\
        SphereFace~\cite{Liu2017} & 99.67 &  96.84 &  97.05 &95.58 & 91.27\\
        CosFace~\cite{Wang2018} & 99.78  &  98.26  &   \textbf{98.17} & \textbf{96.18} & 92.18\\
        ArcFace~\cite{Deng2018} & 99.81 & 98.40 & 98.05 & 95.96 & 92.72\\
        \rowcolor{Gray}
        MagFace & \textbf{99.83} &  \textbf{98.46} &  \textbf{98.17} & 96.15 & \textbf{92.87}  \\
        \hline
      \end{tabular}
    }
  \end{center}
  \caption{Verification accuracy (\%) on easy benchmarks.}      \label{table:fr_faces}
\end{table}
\setlength{\tabcolsep}{1.4pt}


\noindent
\textbf{Results on LFW, CFP-FP, AgeDB-30, CALFW and CPLFW.}
We directly use the aligned images and protocols adopted by ArcFace~\cite{Deng2018} and present our results in Tab.~\ref{table:fr_faces}.
We note that performances are almost saturated.
Compared to CosFace which is the second best baseline, ArcFace achieves $0.03\%, 0.14\%, 0.54\%$ improvement on LFW, CFP-FP and CPLFW, while drops $0.12\%, 0.22\%$ on AgeDB-30 and CALFW.
MagFace obtains the overall best results and surpasses ArcFace by $0.02\%$, $0.06\%$, $0.12\%$, $0.19\%$ and $0.15\%$ on five benchmarks respectively.

\setlength{\tabcolsep}{4pt}
\begin{table}
  \begin{center}
    \footnotesize{
      \begin{tabular}{lcccccc}
        \hline
        Method
        & \multicolumn{3}{c}{IJB-B (TAR@FAR)} & \multicolumn{3}{c}{IJB-C (TAR@FAR)} \\
        \cline{2-7}
        & 1e-6 &  1e-5 & 1e-4 & 1e-6 & 1e-5 & 1e-4 \\
        \hline
        VGGFace2*~\cite{cao2018vggface2} & - & 67.10 & 80.00 & - & 74.70 & 84.00 \\
        CenterFace*~\cite{wen2016discriminative} & - & - & - & - & 78.10 & 85.30 \\
        CircleLoss*~\cite{Sun2020} &-&-&-&-&89.60&93.95\\
        ArcFace*~\cite{Deng2018} & - & - & 94.20 & - & - & 95.60 \\
        \hline
        Softmax & \textbf{46.73}	&75.17&90.06	&64.07	&83.68	&92.40\\
        SV-AM-Softmax~\cite{Wang2018_sv}& 29.81 & 69.25 & 84.79 & 63.45 & 80.30 & 88.34 \\
        SphereFace~\cite{Liu2017} &  39.40&  73.58 &  89.19 &  68.86 &  83.33 &  91.77 \\
        CosFace~\cite{Wang2018} &  40.41&  89.25 &  94.01 &   87.96 &  92.68  &  95.56\\
        ArcFace~\cite{Deng2018} & 38.68 & 88.50 & 94.09 & 85.65 & 92.69 & 95.74 \\
        \rowcolor{Gray}
        MagFace & 40.91 & 89.88 & 94.33 & 89.26 & 93.67 & 95.81 \\
        \rowcolor{Gray}
        MagFace+ & 42.32 & \textbf{90.36} & \textbf{94.51} & \textbf{90.24} & \textbf{94.08} & \textbf{95.97} \\
        \hline
      \end{tabular}
    }
  \end{center}
  \caption{Verification accuracy (\%) on difficult benchmarks. ``*'' indicates the result quoted from the original paper.} \label{table:fr_ijb}
\end{table}
\setlength{\tabcolsep}{1.4pt}

\noindent
\textbf{Results on IJB-B/IJB-C.}
The IJB-B dataset contains 1,845 subjects with 21.8K still images and 55K frames from 7,011 videos.
As the extension of IJB-B, the IJB-C dataset covers about 3,500 identities with a total of 31,334 images and 117,542 unconstrained video frames.
In the 1:1 verification, the number of positive/negative matches are 10k/8M in IJB-B and 19k/15M in IJB-C.
We report the TARs at FAR=1e-6, 1e-5 and 1e-4 as shown in Tab.~\ref{table:fr_ijb}.

Our implemented ArcFace is on par with the original paper, \eg, our TARs at FAR=1e-4 differ from the authors by $-0.11\%$ and $+0.14\%$ on IJB-B and IJB-C respectively.
Compared to baselines, our MagFace remains the top at all FAR criteria except for FAR=1e-6 on IJB-B as the TAR is very sensitive to the noise when the number of FP is tiny.
Compared to CosFace, MagFace gains $0.50\%, 0.63\%, 0.32\%$ on IJB-B at TAR@FAR=1e-6, 1e-5, 1e-4 and $1.30\%, 0.99\%, 0.25\%$ on IJB-C.
Compared to ArcFace, improvements are of $2.23\%, 1.38\%, 0.24\%$ on IJB-B and $3.61\%, 0.98\%, 0.07\%$ on IJB-C respectively.
This result demonstrates the superiority of MagFace on more challenging benchmarks.
It is worth to mention that when multiple images existed for one identity, the average embedding can be further improved by aggregating features weighted by magnitudes.
For instance, MagFace+ outperforms MagFace by $1.41\%/0.98\%$ at FAR=1e-6, $0.48\%/0.41\%$ at FAR=1e-5 and $0.18\%/0.16\%$ at FAR=1e-4.

\subsection{Face Quality Assessment}\label{sec:exp_fq}
\vspace{-3pt}

\begin{figure}[htb!]
  \centering
  \includegraphics[width=0.4\textwidth]{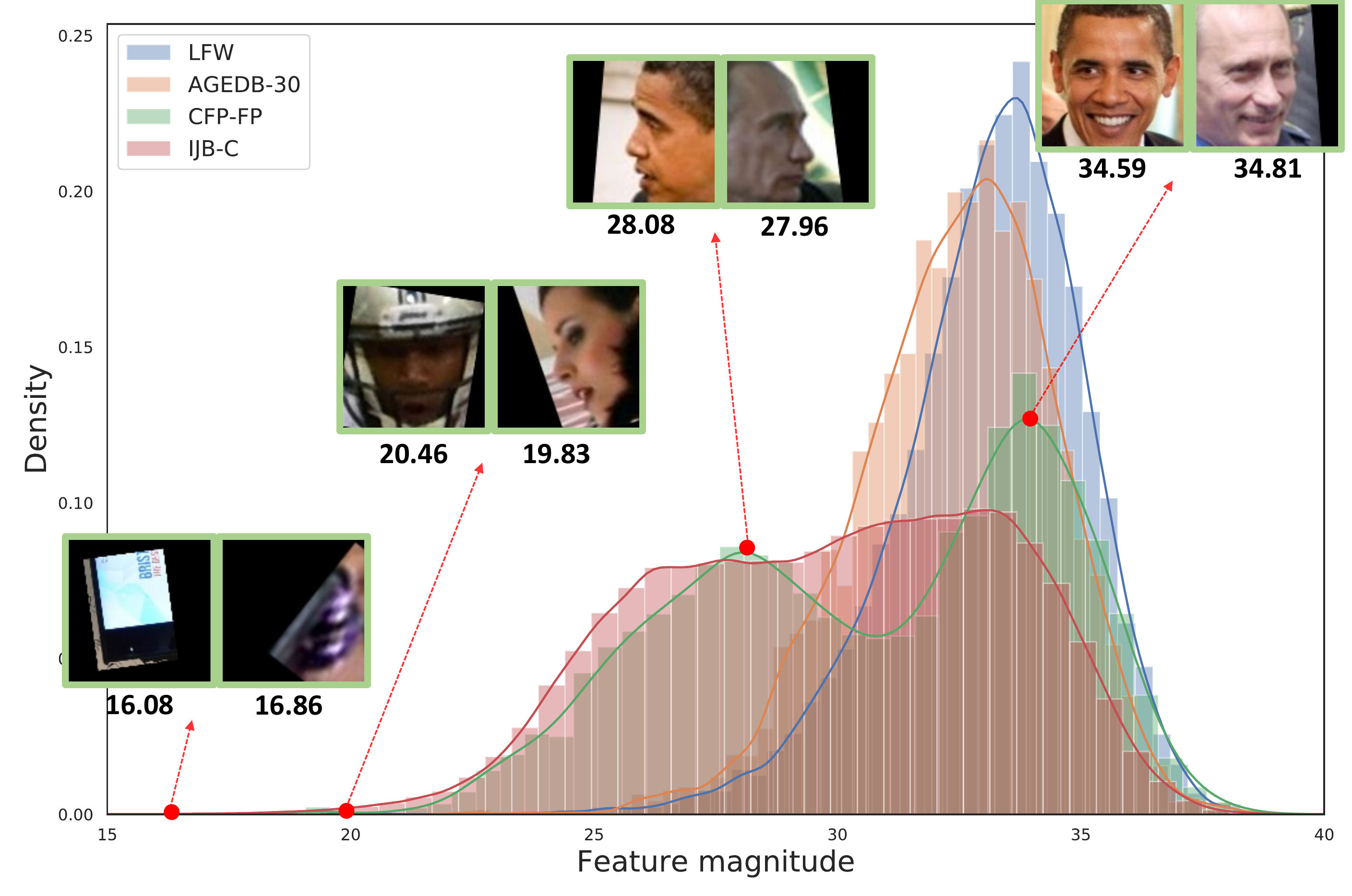}
  \caption{Distributions of magnitudes on different datasets. 
  }
  \label{fig:dist_qlt}
\end{figure}

\begin{figure*}[!htb]
  \centering
  \subfloat[LFW - ArcFace]{\includegraphics[trim=20 5 50 30,clip, width=0.45\textwidth]{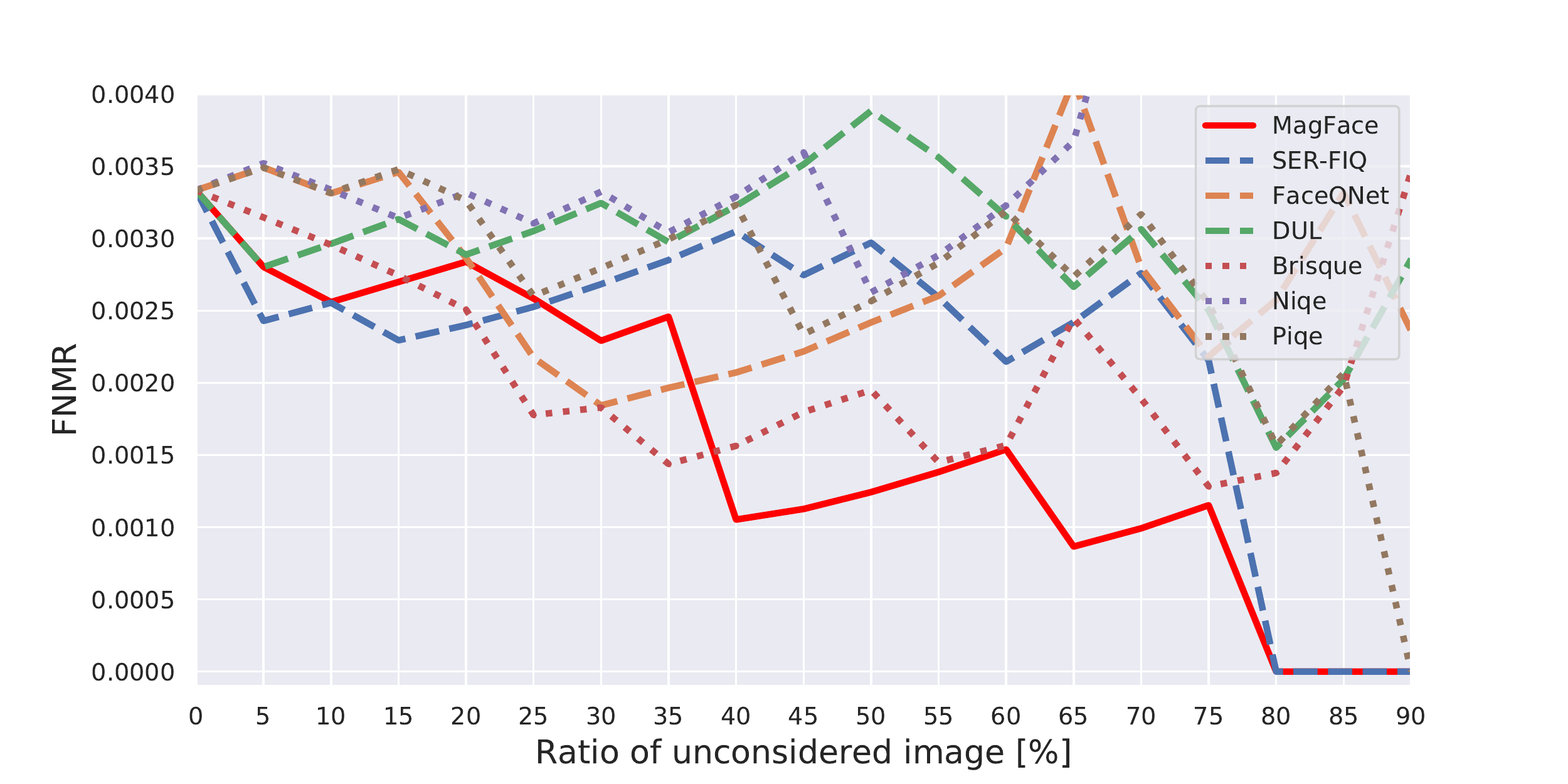}}   \subfloat[LFW - MagFace]{\includegraphics[trim=20 5 50 30,clip,width=0.45\textwidth]{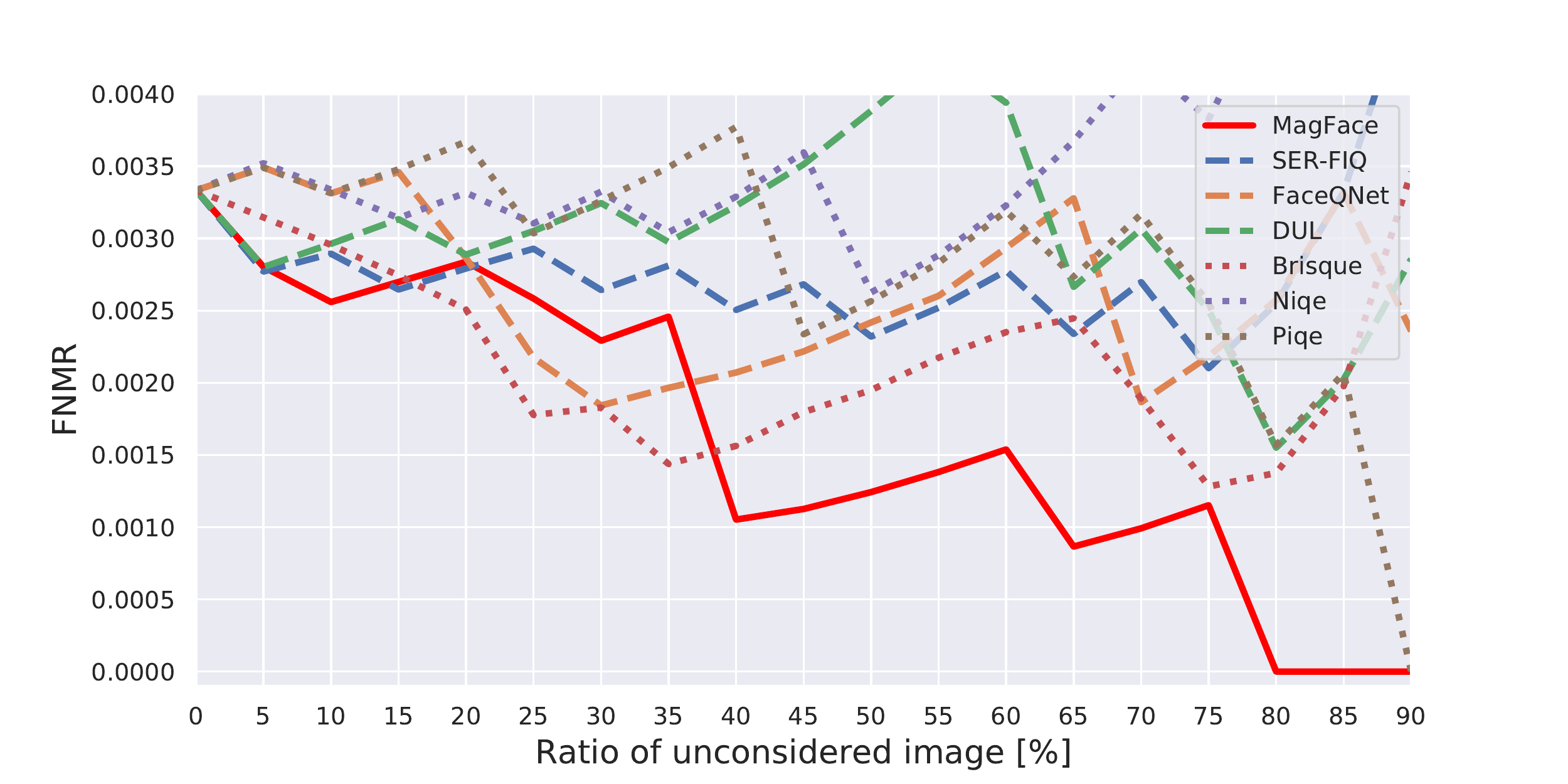}}\\
  \vspace{-12pt}
  \subfloat[CFP-FP - ArcFace]{\includegraphics[trim=20 5 50 30,clip, width=0.45\textwidth]{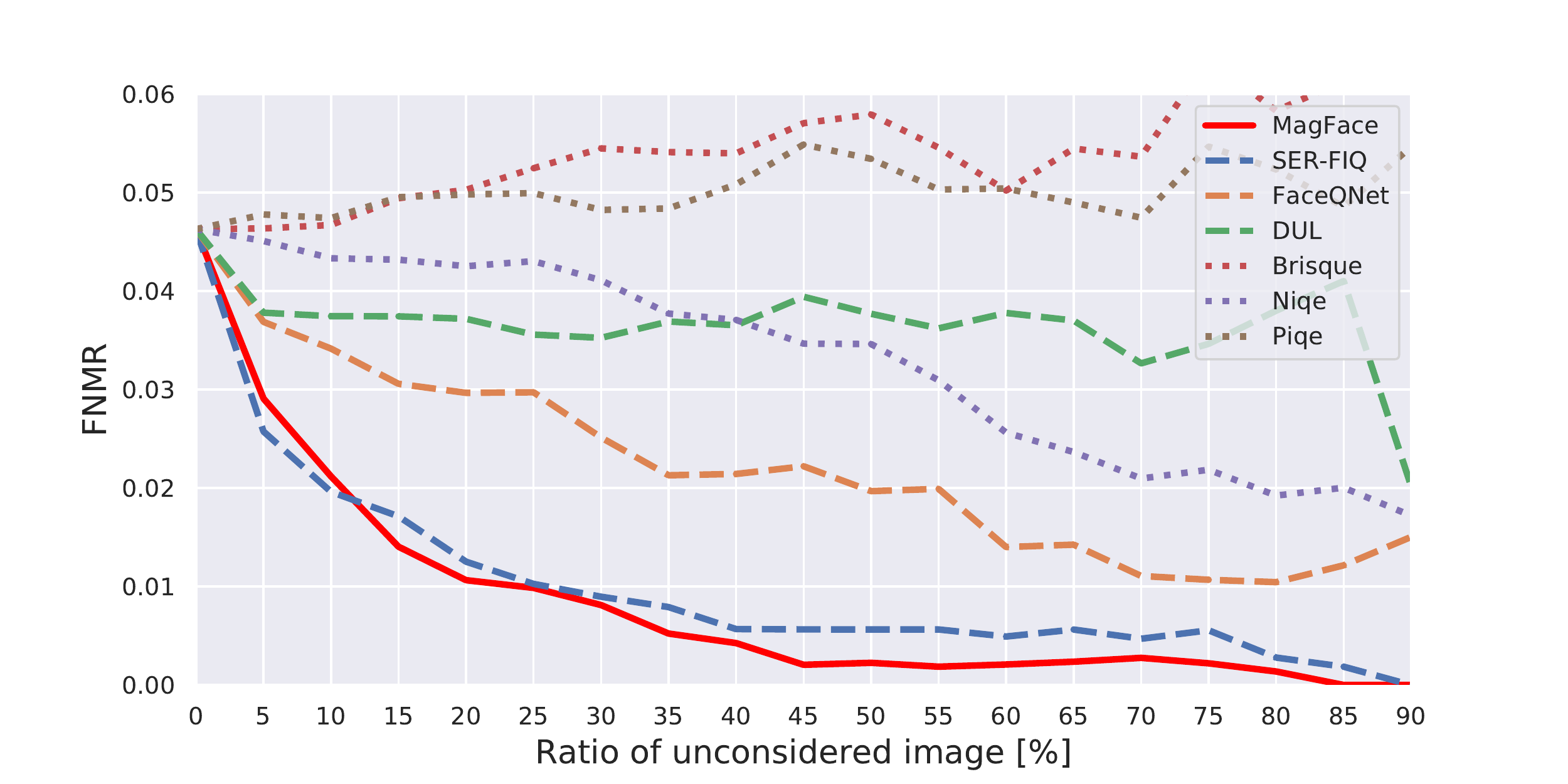}}  \subfloat[CFP-FP - MagFace]{\includegraphics[trim=20 5 50 30,clip, width=0.45\textwidth]{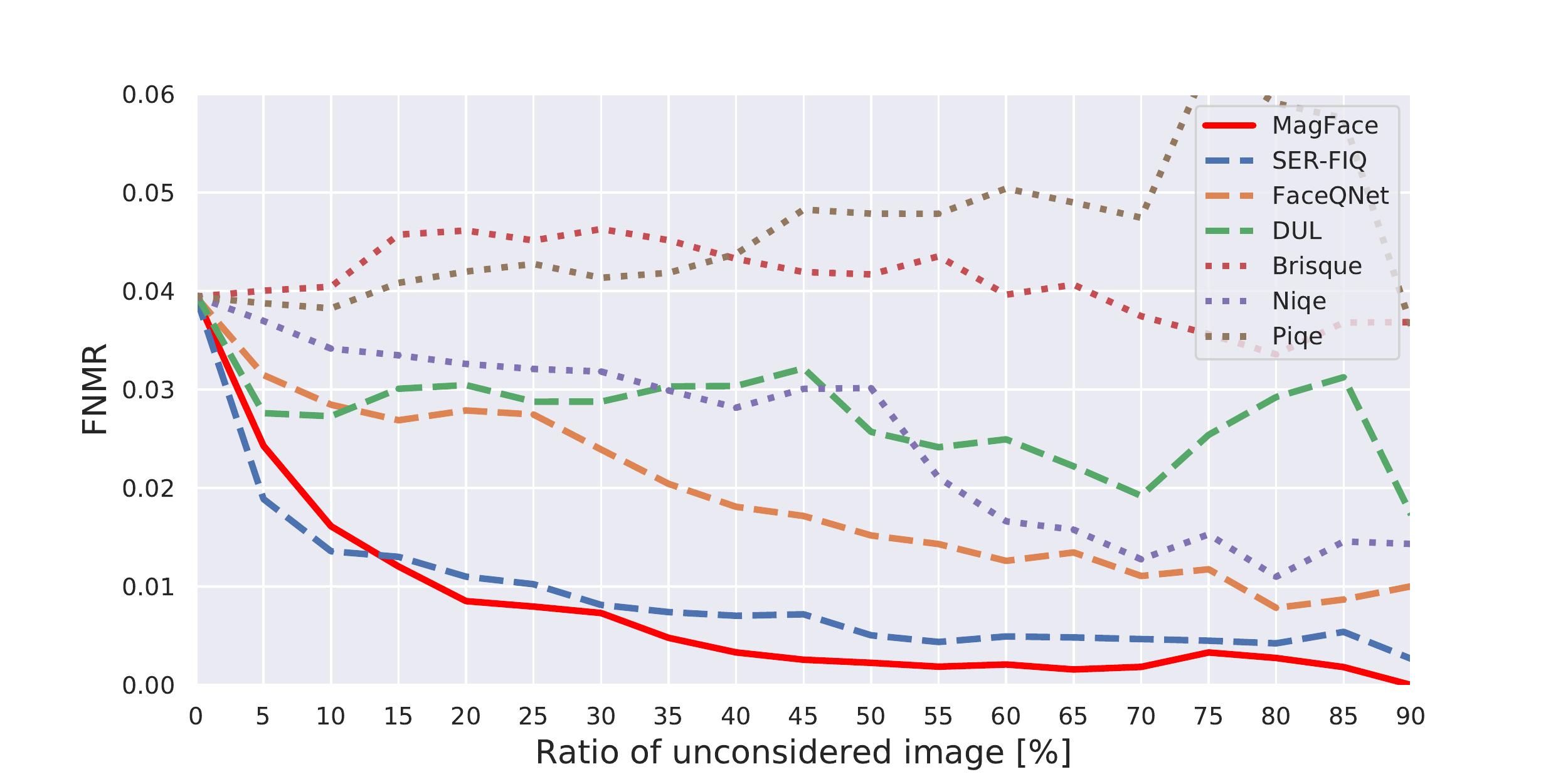}}\\
  \vspace{-12pt}
  \subfloat[AgeDB-30 - ArcFace]{\includegraphics[trim=20 5 50 30,clip, width=0.45\textwidth]{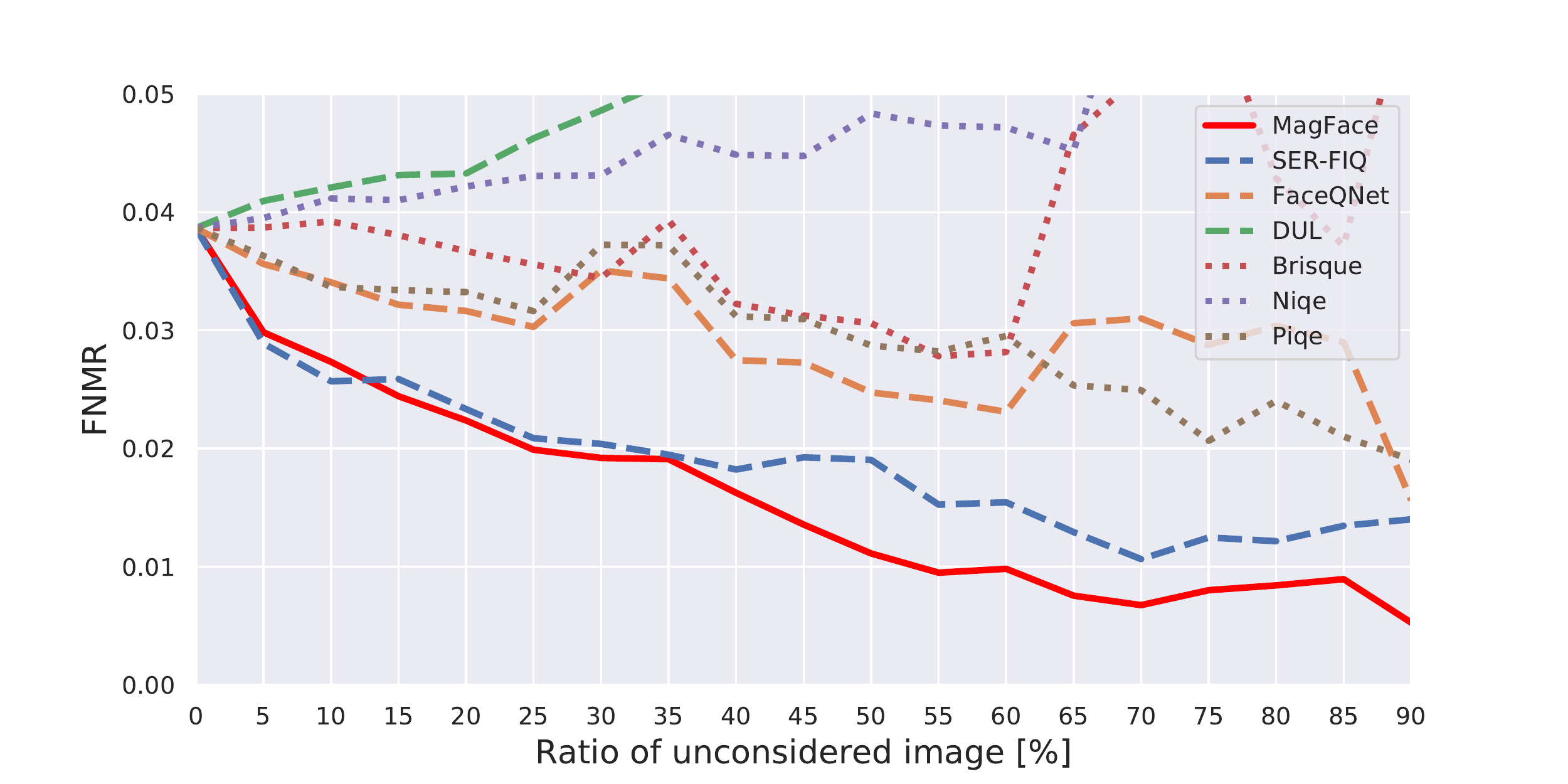}} \subfloat[AgeDB-30 - MagFace]{\includegraphics[trim=20 5 50 30,clip, width=0.45\textwidth]{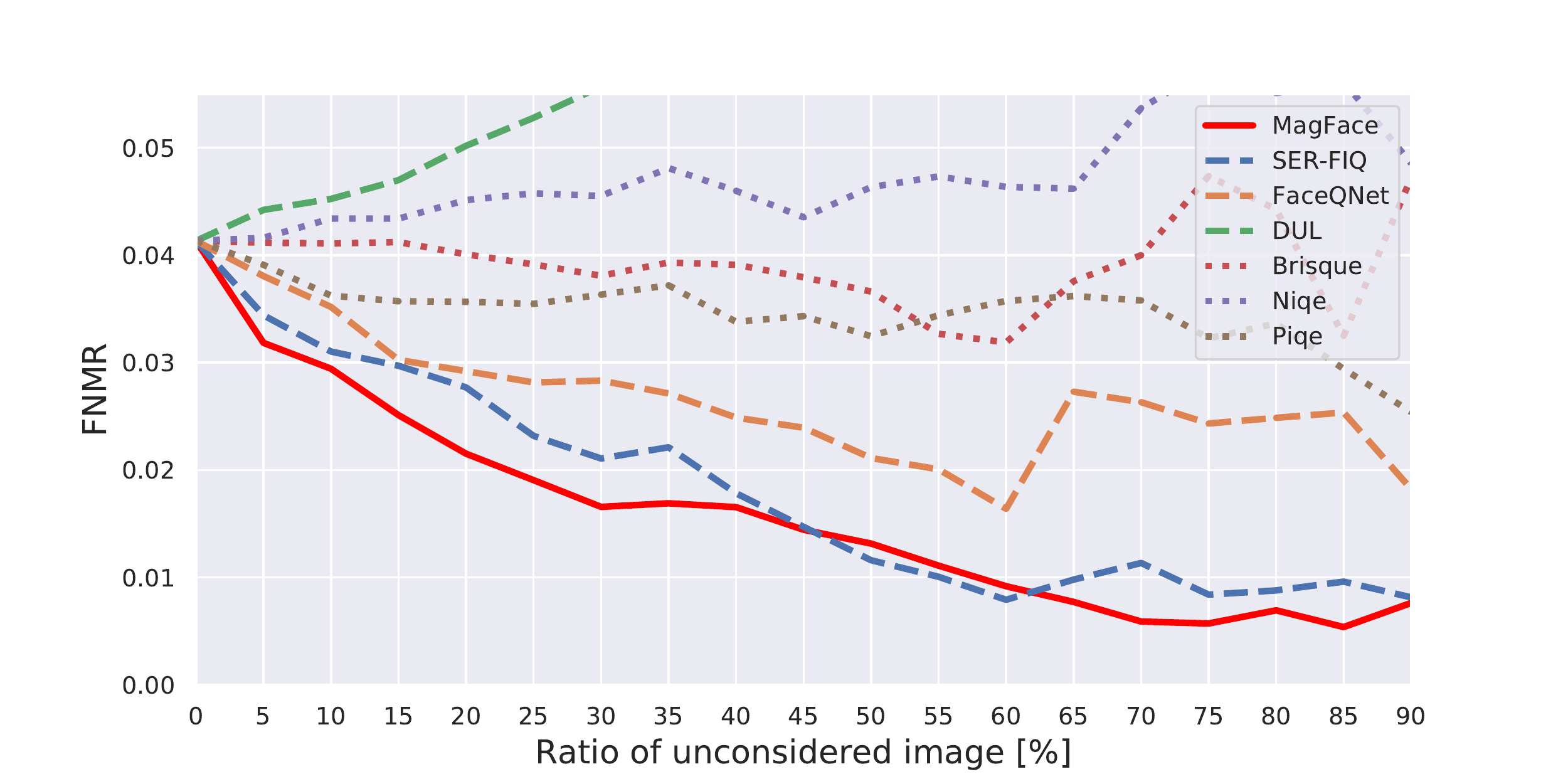}}\\
  \caption{Face verification performance for the predicted face quality values with two evaluation models (ArcFace and MagFace).
    The curves show the effectiveness of rejecting low-quality face images in terms of false non-match rate (FNMR). \textbf{Best viewed in color.}}
  \label{fig:qlt_fnmr}
\end{figure*}

In this part, we investigate the qualitative and quantitative performance of the pre-trained MagFace model mentioned in Tab.~\ref{table:fr_ijb} for quality assessment.


\noindent
\textbf{Visualization of the mean face.}
We first sample 100k images form IJB-C database and divide them into 8 groups based on feature magnitudes.
We visualize the mean faces of each group in Fig.~\ref{fig:quality}.
It can be seen that when magnitude increases, the corresponding mean face reveals more details.
This is because high-quality faces are inclined to be more frontal and distinctive.
This implies the magnitude of MagFace feature is a good quality indicator.

\noindent
\textbf{Sample distribution of datasets.}
Fig.~\ref{fig:dist_qlt} plots the sample histograms of different benchmarks with respect to MagFace magnitudes.
We observe that LFW is the least noisy one where most samples are of large magnitudes.
Due to the larger age variation, the distribution of AGEDB-30 slightly shifts left compared to LFW.
For CFP-FP, there are two peaks at the magnitude around 28 and 34, corresponding to the frontal and profile faces respectively.
Given the large variations in face qualities, we can conclude IJB-C is much more challenging than other benchmarks.
For images (more examples can be found in the supplementary) with magnitudes $a \simeq 15$, there are no faces or very noisy faces to observe.
When feature magnitudes increase from 20 to 40, there is a clear trend that the face changes from profile, blurred and occluded, to more frontal and distinctive.
Overall, this figure convinces us that MagFace is an effective tool to rank face images according to their qualities.

\noindent
\textbf{Baselines.}
We choose six baselines of three types for quantitative quality evaluation.
Brisque~\cite{sun2015no}, Niqe~\cite{Mittal2013} and Piqe~\cite{venkatanath2015blind} are image-based quality metrics.
FaceQNet~\cite{Hernandez-ortega} and SER-FIQ~\cite{Kolf1} are face-based ones.
For FaceQNet, we adopt the released models by the authors.
For SER-FIQ, we use the ``same model'' version which yields the best performance in the paper.
Following the authors' setting, we set $m=100$ to forward each image 100 times with drop-out active in inference.
As a related work, we re-implement the recent DUL~\cite{Chang2020} method that can estimate uncertainty along with the face feature.

\noindent
\textbf{Evaluation metric.}
Following previous work~\cite{Grother2007, Kolf1, Best-Rowden2017}, we evaluate the quality assessment on LFW/CFP-FP/AgeDB via the error-versus-reject curves, where images with the lowest predicted qualities are unconsidered and error rates are calculated on the remaining images.
Error-versus-reject curve indicates good quality estimation when the verification error decreases consistently while increasing the ratio of unconsidered images.
To compute the feature for verification, we adopt the ArcFace* as well as our MagFace models in Tab.~\ref{table:fr_ijb}.


\noindent
\textbf{Results on face verification.}
Fig.~\ref{fig:qlt_fnmr} shows the error-versus-reject curves of different quality methods in terms of false non-match rate (FNMR) reported at false match rate (FMR) threshold of 0.001.
Overall, we have two high-level observations.
1) The curves on CFP-FP and AgeDB-30 are much more smooth than the ones obtained on LFW.
This is because CFP-FP and AgeDB-30 consist of faces with larger variations in pose and age.
Effectively dropping low-quality faces can benefit the verification performance more on these two benchmarks.
2) No matter computing the feature from ArcFace (left column) or MagFace (right column), the curves corresponding to MagFace magnitude are consistently the lowest ones across different benchmarks.
This indicates that the performance of MagFace magnitude as quality generalizes well across datasets as well as face features.
We then analyze the quality performance of each type of methods.
1) The image-based quality metrics (Brisque~\cite{sun2015no}, Niqe~\cite{Mittal2013}, Piqe~\cite{venkatanath2015blind}) lead to relatively higher errors in most cases as the image quality alone is not suitable for generalized face quality estimation.
Factors of the face (such as pose, occlusions, and expressions) and model biases are not covered by these algorithms and might play an important role for face quality assessment.
2) The face-based methods (FaceQNet~\cite{Hernandez-ortega} and SER-FIQ~\cite{Kolf1}) outperforms other baselines in most cases.
In particular, SER-FIQ is more effective than FaceQNet in terms of the verification error rates.
This is due to the fact that SER-FIQ is built on top of the deployed recognition model so that its prediction is more suitable for the verification task.
However, SEQ-FIQ takes a quadratic computational cost \wrt the number of sub-networks $m$ randomly sampled using dropout.
In contrary, the neglectable overhead of computing magnitude makes the proposed MagFace more practical in many real-time scenarios.
Moreover, the training of MagFace does not require explicit labeling of face quality, which is not only time consuming but also error-prone to obtain.
3) At last, the uncertainty method (DUL) performs well on CFP-FP but yields more verification errors on AgeDB-30 when the proportion of unconsidered images is increased.
This may indicate that the Gaussian assumption of data variance in DUL is over-simplified such that the model cannot generalize well to different kinds of quality factors.



\subsection{Face Clustering}\label{sec:exp_fc}
\vspace{-3pt}

In this section, we conduct experiments on face clustering to further investigate the structure of feature representations learned by MagFace.

\begin{figure}[!htb]
  \centering
  \includegraphics[trim=30 0 30 40,clip, width=0.4\textwidth]{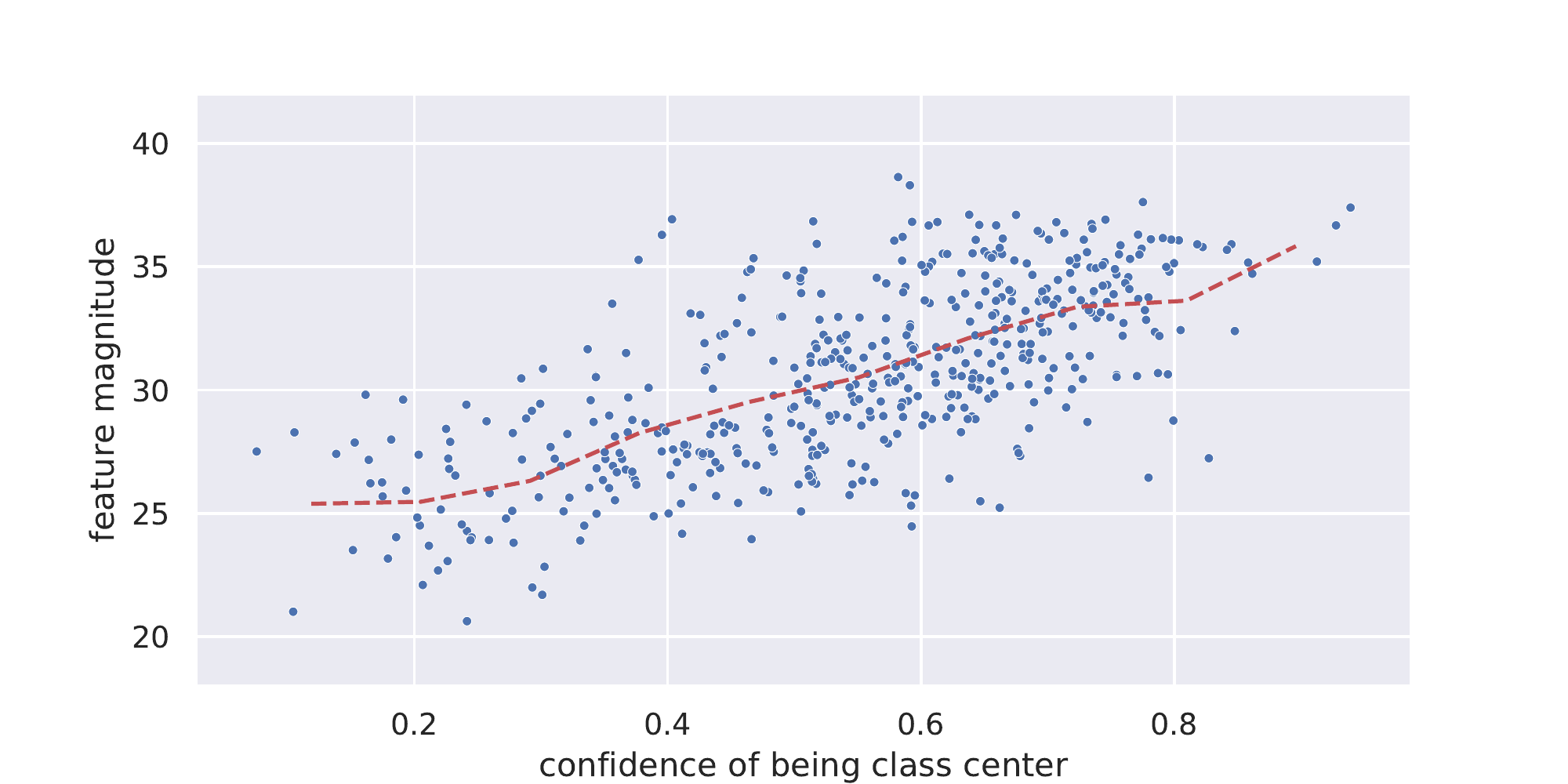}
  \caption{Visualization of MagFace magnitudes of 500 samples from IJB-B-1845 \wrt their confidences of being class centers.}
  \label{fig:cluster_vis}
\end{figure}

\noindent
\textbf{Baselines.}
We compare the performances of MagFace and ArcFace by integrating their features with various clustering methods.
For fair comparisons, we constrain hyperparameters of the two models to be consistent (\eg, s=64, mean margin 0.5) during training.
Four clustering methods are used in the evaluation: K-means~\cite{lloyd1982least}, AHC~\cite{sarle1990algorithms}, DBSCAN~\cite{ester1996density} and L-GCN~\cite{wang2019linkage}.
For non-deterministic algorithms (K-means and AHC), we report the average results from 10 runs.
For L-GCN, we train the model on CASIA-WebFace~\cite{yi2014learning} (0.5M images from 10k individuals) and follow the recommended settings in the paper~\cite{wang2019linkage}.

\noindent
\textbf{Benchmarks.} We adopt the IJB-B~\cite{whitelam2017iarpa} dataset as the benchmark as it contains a clustering protocol of seven sub-tasks varying in the number of ground truth identities.
Following~\cite{wang2019linkage}, we evaluate on three largest sub-tasks where the numbers of identities are 512, 1,024 and 1,845, and the numbers of samples are 18,171, 36,575 and 68,195, respectively.
Normalized mutual information (NMI) and BCubed F-measure~\cite{amigo2009comparison} are employed as the evaluation metrics.


\setlength{\tabcolsep}{4pt}
\begin{table}
  \begin{center}
    \resizebox{0.47\textwidth}{!}{%
      \begin{tabular}{lccccccc}
        \hline
        Method & Net & \multicolumn{2}{c}{IJB-B-512} & \multicolumn{2}{c}{IJB-B-1024} & \multicolumn{2}{c}{IJB-B-1845}  \\
        \cline{3-8}
               & & F & NMI & F & NMI & F & NMI \\
        \hline
        \hline
        K-means~\cite{lloyd1982least} & ArcFace & 66.70 & 88.83 & 66.82 & 89.48 &  66.93 & 89.88 \\
        \rowcolor{Gray1}   \cellcolor{white} &  MagFace & \textbf{66.75} & \textbf{88.86} & \textbf{67.33}& \textbf{89.62} &  \textbf{67.06} & \textbf{89.96} \\
        AHC~\cite{sarle1990algorithms} & ArcFace & 69.72 & 89.61 & 70.47 & 90.54 & 70.66 & 90.90 \\
        \rowcolor{Gray1}   \cellcolor{white}      & MagFace & \textbf{70.24} & \textbf{89.99} & \textbf{70.68} & \textbf{90.67} & \textbf{70.98}&  \textbf{91.06} \\
        DBSCAN~\cite{ester1996density} & ArcFace & 72.72 & 90.42 & 72.50 & 91.15 &  73.89 & 91.96 \\
        \rowcolor{Gray1}   \cellcolor{white}        & MagFace & \textbf{73.13}& \textbf{90.61} & \textbf{72.68}& \textbf{91.30} &  \textbf{74.26}& \textbf{92.13} \\
        L-GCN~\cite{wang2019linkage} & ArcFace & 84.92 & 93.72 & 83.50 & 93.78 & 80.35 & 92.30 \\
        \rowcolor{Gray1}   \cellcolor{white}  & MagFace & \textbf{85.27} & \textbf{93.83} & \textbf{83.79} & \textbf{94.10} &  \textbf{81.58} &  \textbf{92.79} \\
        \hline
      \end{tabular}
    }
  \end{center}
  \caption{F-score (\%) and NMI (\%) on clustering benchmarks.} \label{table:clustering1}
\end{table}
\setlength{\tabcolsep}{1.4pt}

\noindent
\textbf{Results.}
Tab.~\ref{table:clustering1} summarizes the clustering results.
We can observe that with stronger clustering methods from K-means to L-GCN, the overall clustering performance can be improved.
For any combination of clustering and protocol, MagFace always achieves better performance than ArcFace in terms of both F-score and NMI metrics.
This consistent superiority demonstrates the MagFace feature is more suitable for clustering.
Notice that we keep the same hyperparameters for clustering.
The improvement of using MagFace must come from its better within-class feature distribution, where the high-quality samples around the class center are more likely to be separated across different classes.

We further explore the relationship between feature magnitudes and the confidences of being class centers.
Following the idea mentioned in \cite{yang2020learning}, the confidence of being a class center for each sample is estimated based on its neighbor structure defined by face features.
The samples with dense and pure local connection have high confidence, while those with sparse connections or residing in the boundary among several clusters have low confidence.
From Fig.~\ref{fig:cluster_vis}, it is easy to observe that the MagFace magnitude is positively correlated with confidence of class center on the IJB-B-1845 benchmark.
This result reflects that the MagFace feature exhibits the expected within-class structure, where high quality samples distribute around class center while low quality ones are far away from the center.


\section{Conclusion}
\vspace{-1pt}

In this paper, we propose MagFace to learn unified features for face recognition and quality assessment.
By pushing ambiguous samples away from class centers, MagFace improves the within-class feature distribution from previous margin-based work for face recognition.
The adequate theoretical and experimental results convince that MagFace can simultaneously access quality for the input face image.
As a general framework, MagFace can be potentially extended to benefit other classification tasks such as fine-grained object recognition, person re-identification.
Moreover, the proposed principle of exploring feature magnitude paves the way to estimate quality for other objects, \eg, person body in reid or action snippet in activity classification.

{\small
  \bibliographystyle{ieee_fullname}
  \bibliography{egbib}
}

\clearpage
\begin{appendix}

\setlength{\tabcolsep}{5pt}

\begin{table*}
  \begin{center}
    \footnotesize{
      \begin{tabular}{cccccc|ccc|ccccc}
        \hline
        Method & \multicolumn{5}{c|}{Hyperparameters} & \multicolumn{3}{c|}{Margin} & CFP-FP & \multicolumn{4}{c}{IJB-C (TAR@FAR)} \\
        \cline{11-14} & $l_m$ & $u_m$ & $\lambda_g$ & $l_a$ & $u_a$ & mean & max & min  &     & 1e-6 & 1e-5 & 1e-4 & 1e-3 \\
        \hline
        ArcFace &  - &- &- & - & - & 0.50 & - & - & 97.32 & 83.88 & 91.59  & 95.00  & 96.86 \\
        \hline
        MagFace & 0.45 & 0.65 & 35 & 10 & 110 & 0.50 & 0.49 & 0.52  & 97.23 & 81.12 & 91.44 & 94.95 &  96.96  \\
        \rowcolor{Gray1}   \cellcolor{white} & 0.40 & 0.80 & 35 & 10 & 110 & 0.50  & 0.46 & 0.53 &  \textbf{97.47} & \textbf{85.82} & \textbf{92.06} & \textbf{95.12} & 96.92 \\
               & 0.35 & 1.00 & 35 & 10 & 110 & 0.50 & 0.42 & 0.54 & 97.40 & 84.35 & 91.65 &  95.05 & \textbf{97.02}  \\
        \rowcolor{Gray1}   \cellcolor{white}  & 0.25 & 1.60 & 35 & 10 & 110 & 0.50 & 0.35 & 0.61 & 97.30 & 81.64 & 91.09 & 94.91 & 96.87  \\
        \hline
      \end{tabular}
    }
  \end{center}
  \caption{Verification accuracy (\%) on CFP-FP and IJB-C with different distributions of margins. Backbone network: ResNet50.}      \label{table:ablation}
\end{table*}
\setlength{\tabcolsep}{1.4pt}

\section{Proofs for MagFace}

Recall the MagFace loss for a sample $i$ is
\begin{equation}
  \resizebox{0.4\textwidth}{!}{$
    \begin{split}
      L_i & = -\log \frac{e^{s\cos{(\theta_{y_i}+ m(a_i))}}}{ e^{s\cos{(\theta_{y_i}+m(a_i))}} + \sum_{j=1, j\neq y_i}^{n}e^{s \cos{\theta_j}}} \\
      & \hspace{5em} + \lambda_g   g(a_i)
    \end{split}
    $}
  \label{eq:loss1}
\end{equation}
Let $A(a_i) = s \cos(\theta_{y_i}+ m(a_i))$ and $B = \sum_{j=1, j\neq y_i}^{n}e^{s \cos{\theta_j}}$ and rewrite the loss as
\begin{equation}
  \resizebox{0.25\textwidth}{!}{$
    L_i = - \log \frac{e^{A(a_i)}}{e^{A(a_i)} + B} + \lambda_g g(a_i)
    $}
\end{equation}
We first introduce and prove Lemma~\ref{prop:1}.

\begin{lemma}
  \label{prop:1}
  Assume that $f_i$ is top-k correctly classified and $m(a_i)\in [0, \pi/2]$. If the number of identities $n$ is much larger than $k$ (\textit{i.e.}, $n\gg k$), the probability of $\theta_{y_i} + m(a_i) \in [0, \pi/2]$ approaches  1.
\end{lemma}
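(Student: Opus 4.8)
The plan is to split the target event $\{\theta_{y_i}+m(a_i)\in[0,\pi/2]\}$ into its trivial lower bound and its substantive upper bound. Since $\theta_{y_i}\ge 0$ as an angle and $m(a_i)\ge 0$ by hypothesis, the constraint $\theta_{y_i}+m(a_i)\ge 0$ holds deterministically, so the entire probabilistic content is to show $P(\theta_{y_i}+m(a_i)\le \pi/2)\to 1$. Because $\cos$ is decreasing on $[0,\pi]$ and $\pi/2-m(a_i)\in[0,\pi/2]$, this is equivalent to $P(\cos\theta_{y_i}\ge \sin m(a_i))\to 1$, and since $\sin m(a_i)\le 1$ it suffices to prove the stronger statement that $\cos\theta_{y_i}\to 1$ in probability, i.e. that $\theta_{y_i}$ concentrates at $0$.

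To establish this, I would model the cosine similarities $\cos\theta_j=\langle \hat f_i,\hat w_j\rangle$ of $f_i$ to the remaining class centers as samples from a fixed law on $[-1,1]$ with upper support endpoint $1$, the natural choice being that induced by (approximately) uniform normalized centers on $S^{d-1}$, whose density is $\propto (1-x^2)^{(d-3)/2}$. The assumption that $f_i$ is top-$k$ correctly classified means exactly that $\cos\theta_{y_i}$ exceeds all but at most $k-1$ of the wrong-class cosines, so $\cos\theta_{y_i}\ge C_{(k)}$, the $k$-th largest order statistic among them. The key step is the extreme-value fact that for i.i.d.\ samples from a distribution with upper endpoint $1$, the $k$-th largest order statistic converges to $1$ in probability as the pool size grows with $k$ fixed: for any $\epsilon>0$, writing $p=F(1-\epsilon)<1$, the event $\{C_{(k)}<1-\epsilon\}$ forces at least $n-k+1$ of the cosines below $1-\epsilon$, and $P(\mathrm{Bin}(n,1-p)\le k-1)\to 0$ since the mean $n(1-p)\to\infty$ while $k$ stays fixed. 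Hence $C_{(k)}\to 1$, forcing $\cos\theta_{y_i}\to 1$ and $\theta_{y_i}\to 0$.

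Combining the pieces, with probability tending to $1$ we have $\theta_{y_i}<\pi/2-m(a_i)$ whenever $m(a_i)<\pi/2$ (the boundary $m(a_i)=\pi/2$ being a null case demanding $\theta_{y_i}=0$), so $\theta_{y_i}+m(a_i)\le \pi/2$ and the claimed probability approaches $1$. I note that a cruder symmetry argument — observing that the wrong cosines are symmetric about $0$, so at most $\approx n/2$ are negative and top-$k$ membership with $n\gg k$ prevents $\cos\theta_{y_i}<0$ — only yields $\theta_{y_i}\le\pi/2$, which is insufficient; the full $n\gg k$ strength is genuinely needed to push $\theta_{y_i}$ to $0$.

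The main obstacle is justifying the distributional model and pinning down exactly what randomness the statement quantifies over; the cleanest route is to treat the centers $w_j$, $j\neq y_i$, as (asymptotically) independent and symmetric about the equator relative to $\hat f_i$, which is well motivated by the near-orthogonality of random high-dimensional vectors but should be declared as a standing assumption. A secondary point is the reduction from the informal hypothesis $n\gg k$ to a genuine limit: I would phrase the conclusion as an asymptotic in $n/k$ and record that the convergence is uniform in any fixed margin value $m(a_i)$ bounded away from $\pi/2$.
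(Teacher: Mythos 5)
Your proposal is sound under its stated modeling assumptions, and it runs on the same probabilistic engine as the paper's proof --- top-$k$ correctness bounds the true-class angle by a $k$-th order statistic, whose tail is killed by a binomial estimate with $k$ fixed and $n\to\infty$ --- but you aim that engine at a genuinely different target. The paper assumes the angles $\theta_j$ are i.i.d.\ uniform on $[0,\pi]$, sets $p=\frac{\pi/2-m(a_i)}{\pi}$, and directly computes that at least $k$ angles fall below $\pi/2-m(a_i)$ with probability $1-\sum_{i=0}^{k-1}\binom{n}{i}p^i(1-p)^{n-i}\to 1$; on that event $\theta_{y_i}\le\theta_{(k)}\le\pi/2-m(a_i)$ and the lemma follows in one step. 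You instead prove the strictly stronger claim that $\cos\theta_{y_i}\to 1$ in probability (via the extreme-value fact $C_{(k)}\to 1$) and then specialize to the threshold $\sin m(a_i)$. The trade-off is real: your route needs the wrong-class cosine law to have upper support endpoint $1$, i.e.\ mass at angles arbitrarily close to $0$, a stronger distributional hypothesis than the paper's, which only needs positive mass below $\pi/2-m(a_i)$; in return you get concentration of $\theta_{y_i}$ at $0$ and hence uniformity over all margins bounded away from $\pi/2$, which the paper's fixed-threshold computation does not provide. Several of your side remarks also improve on the paper: you use only the correct one-sided implication $\cos\theta_{y_i}\ge C_{(k)}$, whereas the paper asserts the target event ``is the same as'' the order-statistic event when it is merely implied by it (harmless, since the inequality goes the right way); you isolate the degenerate case $m(a_i)=\pi/2$, where $p=0$ and the paper's argument silently fails; and you flag the unresolved question of what randomness is being quantified over --- the paper, like you, computes unconditional probabilities of auxiliary events although the lemma conditions on the top-$k$ event, a gap both arguments share.
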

\begin{proof}
  Denote the angle between feature $f_i$  and center class $W_j, j \in \{ 1, \cdots, n\}$  as $\theta_j$. Assuming the distribution of $\theta_j$ is uniform, it's easy to prove $P\left(\theta_j+m(a_i) \in [0, \pi/2]\right)=\frac{\pi/2-m(a_i)}{\pi}$. Let $p=\frac{\pi/2-m(a_i)}{\pi}$. If $f_i$ is top-k correctly classified,  the probability of $\theta_{y_i} + m(a_i) \in [0, \pi/2]$ is the same as the probability of there are at least k $\theta$ to satisfy $\theta + m(a_i) \in [0, \pi/2]$. Then the probability is
  \begin{equation}
    \resizebox{0.4\textwidth}{!}{$
      \begin{split}
        P\left(\theta_{y_i}+m(a_i)\in [0, \pi/2]\right)
        &= \sum_{i=k}^n \binom{n}{i} p^i(1-p)^{(n-i)} o\\
        & = 1 - \sum_{i=0}^{k-1} \binom{n}{i} p^i(1-p)^{(n-i)} \\
      \end{split}$}
  \end{equation}
  When $n$ is a large integer and $n\gg k$,  each $\binom{n}{i} p^i(1-p)^{(n-i)}, i=1, 2, \cdots k-1$ converges to 0. Therefore, probability of $\theta_{y_i} + m(a_i) \in [0, \pi/2]$ approaches 1.
\end{proof}
Lemma~\ref{prop:1} is fundamental  for the following proofs. The number of identities is large in real-world applications (\textit{e.g.}, 3.8M for MS1Mv2~\cite{guo2016ms,Deng2018}). Therefore, the probability of $\theta_{y_i}+ m(a_i)\in [0, \pi/2]$  approaches 1 in most cases.

\subsection{Requirements for MagFace}
In MagFace, $m(a_i), g(a_i), \lambda_g$ are required to have the following properties:
\begin{enumerate}
\item $m(a_i)$ is an increasing convex function in $[l_a, u_a]$ and $m'(a_i) \in (0, K]$, where $K$ is a upper bound;
\item $g(a_i)$ is a strictly convex function with $g'(u_a)=0$;
\item $\lambda_g \geq \frac{sK}{-g'(l_a)}$.
\end{enumerate}

\subsection{Proof for Property of Convergence}
We prove the property of convergence by showing the strict convexity of the function $L_i$ (Property~\ref{prop:convex}) and the existence of the optimum (Property~\ref{prop:uniq}). 
\begin{prop}
  \label{prop:convex}
  For $a_i \in [l_a, u_a]$, $L_i$ is a strictly convex function of $a_i$.
\end{prop}
\begin{proof}
  The first and second deriviates of $A(a_i)$ are
  \begin{equation}
    \resizebox{0.35\textwidth}{!}{$
      \begin{aligned}
        A'(a_i) & = -s\sin(\theta_{y_i}+ m(a_i)) m'(a_i) \\
        A''(a_i) & = -s\cos(\theta_{y_i}+ m(a_i)) (m'(a_i))^2  \\
        & \qquad - s\sin(\theta_{y_i}+ m(a_i)) m''(a_i)
      \end{aligned}
      $}
  \end{equation}
  According to Lemma~\ref{prop:1}, we have  $\cos(\theta_{y_i}+ m(a_i)) \geq 0$ and  $\sin(\theta_{y_i}+ m(a_i)) \geq 0$.  Because we define $m(a_i)$ to be convex and $g(a_i)$ to be strictly convex for $a_i \in [l_a, u_a]$,  $m''(a_i) \geq 0$  and $g''(a_i) > 0$ always hold. Therefore, $A''(a_i) \leq 0$.

  The first and second order derivatives of the loss $L_i$ are
  \begin{equation*}
    \resizebox{0.45\textwidth}{!}{$
      \begin{aligned}
        \frac{\partial L_i}{\partial a_i} & = -\frac{B}{e^{A(a_i)}+B} A'(a_i) + \lambda_g g'(a_i) \\
        \frac{\partial^2 L_i}{(\partial a_i)^2} &= -\frac{B}{(e^{A(a_i)}+B)^2} \left ( (e^{A(a_i)}+B) A''(a_i)  -  B  e^{A(a_i)} A'(a_i)^2 \right ) \\
        & \qquad + \lambda_g g''(a_i) \\
        &= -\frac{B}{e^{A(a_i)}+B} A''(a_i)  + \frac{B^2}{(e^{A(a_i)}+B)^2}  e^{A(a_i)} A'(a_i)^2 \\
        & \qquad + \lambda_g g''(a_i)
      \end{aligned}
      $}
  \end{equation*}

   As $B>0, e^{A(a_i)}+B>0$, it's easy to prove that first two parts of $\frac{\partial^2 L_i}{(\partial a_i)^2} $ are non-negative while the third part is always positive. Therefore, $\frac{\partial^2 L_i}{(\partial a_i)^2}>0$ and $L_i$ is a strictly convex function with respect to $a_i$.
\end{proof}

\begin{prop}
  \label{prop:uniq}
  A unique optimal solution $a_i^*$ exists in $[l_a, u_a]$.
\end{prop}

\begin{proof}
  Because the loss function $L_i$ is a strictly convex function, we have $\frac{\partial L_i}{\partial a^1_i} > \frac{\partial L_i}{\partial a^2_i}$ if $u_a \geq a^1_i > a^2_i \geq l_a$. Next we prove that there exist a optimal solution  $a^*_i\in [l_a, u_a]$. If it exists, then it is unique because of the strict convexity.

  As $ \frac{\partial L_i}{\partial a_i}(a_i) = \frac{Bs}{e^{A(a_i)}+B}\sin(\theta_{y_i}+ m(a_i)) m'(a_i)  + \lambda_g g'(a_i)$ and considering the constraints $m'(a_i) \in (0, K]$, $g'(u_a) = 0$, $\lambda_g \geq \frac{sK}{-g'(l_a)}$,  the values of derivatives of $l_a, u_a$ are
  \begin{equation}
    \resizebox{0.4\textwidth}{!}{$
      \begin{split}
        \frac{\partial L_i}{\partial a_i}(u_a) & = \frac{Bs}{e^{A(a_i)}+B}\sin(\theta_{y_i}+ m(a_i))m'(u_a) > 0 \\
        \frac{\partial L_i}{\partial a_i}(l_a) & = \frac{Bs}{e^{A(a_i)}+B}\sin(\theta_{y_i}+ m(a_i))m'(l_a) + \lambda_g g'(l_a) \\
        & < sK + \lambda_g g'(l_a) \leq 0
      \end{split}
      $}
  \end{equation}
  As $\frac{\partial L_i}{\partial a_i}$ is monotonically and strictly increasing, there must exist a unique value in $[l_a, u_a]$ which have a 0 derivative. Therefore, an optimal solution exists and is unique.
\end{proof}

\subsection{Proof for Property  of Monotonicity}
To prove the property of monotonicity, we first show that optimal $a_i^*$ increases with a smaller cosine-distance to its class center (Property~\ref{prop:mono1}).
As $B$ can reveal the overall cos-distances to other class centers, we further prove that decreasing $B$ (distances to other class centers increases) can increase  optimal feature magnitude (Property~\ref{prop:mono2}).
In the end, we can conclude that $a_i^*$ is monotonically increasing as the cosine-distance to its class center decreases and the cosine-distances to other classes increase.
\begin{prop}
  \label{prop:mono1}
  With fixed $f_i$ and $W_j, j\in\{1,\cdots, n\}, j\neq y_i$, the optimal feature magnitude $a^*_i$ is monotonically increasing if the cosine-distance to its class center $W_{y_i}$ decreases.
\end{prop}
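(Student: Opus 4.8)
The plan is to treat the first-order optimality condition as an implicit relation between $a_i^*$ and $\theta_{y_i}$ and to differentiate it. From the proof of Property~\ref{prop:uniq}, the optimum $a_i^*$ is an \emph{interior} point of $[l_a,u_a]$ solving $F(a_i^*,\theta_{y_i})=0$, where I write $F(a_i,\theta_{y_i}):=\frac{\partial L_i}{\partial a_i}=\frac{Bs}{e^{A(a_i)}+B}\sin(\theta_{y_i}+m(a_i))\,m'(a_i)+\lambda_g g'(a_i)$. The boundary sign conditions $F(l_a,\cdot)<0$ and $F(u_a,\cdot)>0$ established there are uniform in $\theta_{y_i}$ (they use only $\sin(\cdot)\ge 0$, the bound $\tfrac{Bs}{e^{A}+B}<s$, and the constraints on $m'$, $g'$, $\lambda_g$), so the optimum remains interior as $\theta_{y_i}$ varies and the implicit function theorem applies, giving $\frac{da_i^*}{d\theta_{y_i}}=-\big(\partial F/\partial\theta_{y_i}\big)\big/\big(\partial F/\partial a_i\big)$.

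First I would dispatch the denominator: $\partial F/\partial a_i=\partial^2 L_i/(\partial a_i)^2>0$ is exactly the strict convexity already proved in Property~\ref{prop:convex}, so the sign of $\frac{da_i^*}{d\theta_{y_i}}$ reduces to that of the cross-partial $\partial F/\partial\theta_{y_i}$. Writing $\phi:=\theta_{y_i}+m(a_i)$ and holding $a_i$ fixed, only the fraction depends on $\theta_{y_i}$, and a direct differentiation yields
\begin{equation*}
\frac{\partial F}{\partial\theta_{y_i}}=Bs\,m'(a_i)\,\frac{\cos\phi\,(e^{s\cos\phi}+B)+s\sin^2\phi\,e^{s\cos\phi}}{(e^{s\cos\phi}+B)^2}.
\end{equation*}
By Lemma~\ref{prop:1} I may take $\phi\in[0,\pi/2]$, whence $\cos\phi\ge 0$ and $\sin\phi\ge 0$; combined with $B>0$, $s>0$, and $m'(a_i)>0$ every factor is nonnegative and the numerator is strictly positive (if $\cos\phi>0$ the first term is positive, while at $\phi=\pi/2$ the second term equals $s>0$). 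Hence $\partial F/\partial\theta_{y_i}>0$, and together with $\partial F/\partial a_i>0$ this gives $\frac{da_i^*}{d\theta_{y_i}}<0$. Since the cosine-distance to the class center is increasing in $\theta_{y_i}$, a decrease in that distance is a decrease in $\theta_{y_i}$ and therefore an increase in $a_i^*$, which is the claim.

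The main obstacle is pinning down the sign of the cross-partial: the argument collapses without the guarantee $\phi\in[0,\pi/2]$, so Lemma~\ref{prop:1} is doing the essential work here. A secondary point that must not be skipped is verifying that the optimum stays interior across the whole range of $\theta_{y_i}$, so the implicit differentiation is legitimate rather than running into a corner solution. An alternative that sidesteps differentiability of $a_i^*$ altogether is the monotone-comparative-statics version: since $F$ is strictly increasing in $a_i$ and (as just shown) strictly increasing in $\theta_{y_i}$, decreasing $\theta_{y_i}$ shifts the increasing curve $F(\cdot,\theta_{y_i})$ pointwise downward and hence moves its unique root to a larger $a_i^*$.
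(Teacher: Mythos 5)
Your proof is correct, and its core mechanism matches the paper's, but your primary route is technically different. The paper argues by a direct two-point comparison: fixing $a_i$ and taking $\theta_{y_i}^1 < \theta_{y_i}^2$, it notes that on $[0,\pi/2]$ this makes the numerator $\sin(\theta_{y_i}+m(a_i))$ smaller and the denominator $e^{s\cos(\theta_{y_i}+m(a_i))}+B$ larger, so $\frac{\partial L_i}{\partial a_i}$ is pointwise strictly smaller at $\theta_{y_i}^1$; it then orders the optima via $0=\frac{\partial L_i(\theta_{y_i}^1)}{\partial a_{i,1}^*}=\frac{\partial L_i(\theta_{y_i}^2)}{\partial a_{i,2}^*}>\frac{\partial L_i(\theta_{y_i}^1)}{\partial a_{i,2}^*}$ and strict convexity, with no differentiation in $\theta_{y_i}$ and no implicit function theorem at all. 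That is precisely the "alternative" you sketch in your closing paragraph, so your fallback is in effect the paper's proof. Your main argument — implicit differentiation of the first-order condition — is also sound: the cross-partial is computed correctly, its strict positivity on $\phi\in[0,\pi/2]$ is handled correctly (including the $\phi=\pi/2$ endpoint, where the $s\sin^2\phi\,e^{s\cos\phi}$ term takes over), and you discharge the two obligations the implicit function theorem imposes, namely interiority of the optimum uniformly in $\theta_{y_i}$ (via the boundary sign conditions from the uniqueness proof) and nonvanishing of $\partial F/\partial a_i$ (strict convexity). The trade-off: your route yields an explicit rate $\frac{da_i^*}{d\theta_{y_i}}$, at the cost of smoothness and interiority bookkeeping; the paper's route is more elementary, needing only the pointwise ordering of derivatives plus convexity of $L_i$. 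Both arguments lean on Lemma~\ref{prop:1} in the same essential way — without the guarantee $\theta_{y_i}+m(a_i)\in[0,\pi/2]$, neither sign analysis survives.
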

\begin{proof}
  Assuming there are two class center $W^1_{y_i}, W^2_{y_i}$ and their cosine distances to feature $f_i$ are $\theta_{y_i}^1,\theta_{y_i}^2$. Assuming $\theta_{y_i}^1 < \theta_{y_i}^2$ (\textit{i.e.}, class center $W^1_{y_i}$ has a smaller distance with feature $f_i$) and the corresponding optimal feature magnitudes are $a_{i,1}^*, a_{i,2}^*$.

  The first derivate of $L_i$ is
  \begin{equation}
    \resizebox{0.4\textwidth}{!}{$
      \begin{aligned}
        \frac{\partial L_i}{\partial a_i} & = -\frac{B}{e^{A(a_i)}+B} A'(a_i) + \lambda_g g'(a_i) \\
        &= \frac{Bs m'(a_i)}{e^{s \cos(\theta_{y_i}+ m(a_i))}+B}\sin(\theta_{y_i}+ m(a_i)) + \lambda_g g'(a_i)
      \end{aligned}
      $}
  \end{equation}
  For $\theta_{y_i}+ m(a_i)\in (0, \pi/2]$, we have  $\cos(\theta_{y_i}^1+ m(a_i)) > \cos(\theta_{y_i}^2+ m(a_i))$ and $\sin(\theta_{y_i}^1+ m(a_i)) < \sin(\theta_{y_i}^2+ m(a_i))$. With $m'(a_i) > 0$, it's obvious that
  \begin{equation*}
    \resizebox{0.5\textwidth}{!}{$
      \frac{Bs m'(a_i)}{e^{s \cos(\theta^1_{y_i}+ m(a_i))}+B}\sin(\theta^1_{y_i}+ m(a_i))  < \frac{Bs m'(a_i)}{e^{s \cos(\theta^2_{y_i}+ m(a_i))}+B}\sin(\theta^2_{y_i}+ m(a_i)).
      $}
  \end{equation*}
  Therefore, we have $\frac{\partial L_i(\theta^1_{y_i})}{\partial a_i} < \frac{\partial L_i(\theta^2_{y_i})}{\partial a_i}$. Based on the property of optimal solution for strictly convex function, we have $0 = \frac{\partial L_i(\theta^1_{y_i})}{\partial a_{i,1}^*} = \frac{\partial L_i(\theta^2_{y_i})}{\partial a_{i,2}^*} > \frac{\partial L_i(\theta^1_{y_i})}{\partial a_{i,2}^*}$, which leads to $a_{i,1}^* > a_{i,2}^*$.
\end{proof}

\begin{prop}
  \label{prop:mono2}
  With other things fixed, the optimal feature magnitude $a^*_i$ is monotonically increasing with a decreasing $B$ (\ie, increasing inter-class distance).
\end{prop}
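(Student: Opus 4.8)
The plan is to prove Property~\ref{prop:mono2} by the same mechanism used for Property~\ref{prop:mono1}: I hold everything fixed except the scalar $B$, show that the first derivative $\partial L_i/\partial a_i$ is, at every fixed $a_i \in [l_a, u_a]$, a strictly monotonic function of $B$, and then let the strict convexity established in Property~\ref{prop:convex} translate this pointwise shift of the derivative into a shift of its unique zero, i.e.\ of the optimum $a_i^*$.

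First I would isolate the $B$-dependence in the first-order condition. Recall from the previous proof that
\begin{equation*}
  \frac{\partial L_i}{\partial a_i} = \frac{Bs\,m'(a_i)}{e^{A(a_i)}+B}\sin(\theta_{y_i}+m(a_i)) + \lambda_g g'(a_i),
\end{equation*}
where $A(a_i)=s\cos(\theta_{y_i}+m(a_i))$ and the regularizer term $\lambda_g g'(a_i)$ does not involve $B$. The only place $B$ enters is through the factor $\frac{B}{e^{A(a_i)}+B}$, whose derivative with respect to $B$ equals $\frac{e^{A(a_i)}}{(e^{A(a_i)}+B)^2}>0$; hence this factor is strictly increasing in $B$. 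Since $s\,m'(a_i)>0$ by the requirement $m'(a_i)\in(0,K]$, and $\sin(\theta_{y_i}+m(a_i))>0$ for $\theta_{y_i}+m(a_i)\in(0,\pi/2]$ (the regime guaranteed by Lemma~\ref{prop:1}), multiplying by these strictly positive factors preserves the monotonicity, so $\partial L_i/\partial a_i$ is strictly increasing in $B$ at each fixed $a_i$.

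With this monotonicity established, the conclusion follows exactly as in Property~\ref{prop:mono1}. Take $B^1<B^2$ with corresponding optima $a_{i,1}^*, a_{i,2}^*$. Evaluating at $a_{i,2}^*$ gives $\frac{\partial L_i}{\partial a_i}\big|_{B^1}(a_{i,2}^*) < \frac{\partial L_i}{\partial a_i}\big|_{B^2}(a_{i,2}^*) = 0$; because $\partial L_i/\partial a_i$ is strictly increasing in $a_i$, its zero for $B^1$ must lie at a strictly larger argument, so $a_{i,1}^* > a_{i,2}^*$. Thus a smaller $B$ yields a larger optimum. To read this as an inter-class statement I would finally note $B=\sum_{j\neq y_i}e^{s\cos\theta_j}$, so enlarging the angles (cosine-distances) to the other class centers lowers each $\cos\theta_j$ and hence lowers $B$, which by the above raises $a_i^*$.

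The argument is short and essentially routine once the $B$-monotonicity of the factor $\frac{B}{e^{A(a_i)}+B}$ is spotted, so I do not anticipate a genuine obstacle. The only point requiring care is the sign bookkeeping: I must invoke the positivity of $s\,m'(a_i)$ and of $\sin(\theta_{y_i}+m(a_i))$ from the requirements and Lemma~\ref{prop:1} so that the multiplication preserves rather than flips the monotonicity, and I must keep the direction straight, namely that it is the \emph{smaller} $B$ that produces the \emph{larger} optimum, matching the ``increasing inter-class distance'' phrasing of the statement. Combining Property~\ref{prop:mono1} with Property~\ref{prop:mono2} then yields the full Property of Monotonicity.
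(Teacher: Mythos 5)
Your proof is correct and takes essentially the same route as the paper's: you show that $\partial L_i/\partial a_i$ is strictly increasing in $B$ at each fixed $a_i$ (making explicit, via the derivative $\frac{e^{A(a_i)}}{(e^{A(a_i)}+B)^2}>0$ of the factor $\frac{B}{e^{A(a_i)}+B}$, the step the paper dismisses as ``easy to show''), and then use the strict convexity from Property~\ref{prop:convex} to move the zero of the derivative, concluding $a_{i,1}^* > a_{i,2}^*$ for $B_1 < B_2$ exactly as the paper does.
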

\begin{proof}
  Assume $0< B_1 < B_2$ with optimum $a_{i,1}^*, a_{i,2}^*$. Similar to the proof before, it's easy to show
  \begin{equation*}
    \resizebox{0.5\textwidth}{!}{$
      \frac{B_1 s m'(a_i)}{e^{s \cos(\theta_{y_i}+ m(a_i))}+B_1 }\sin(\theta_{y_i}+ m(a_i))  < \frac{B_2 s m'(a_i)}{e^{s \cos(\theta_{y_i}+ m(a_i))}+B_2 }\sin(\theta_{y_i}+ m(a_i)).
      $}
  \end{equation*}
  Therefore, we have $\frac{\partial L_i(B_1)}{\partial a_i} < \frac{\partial L_i(B_2)}{\partial a_i}$. Based on the property of optimal solution for strictly convex function, we have $0 = \frac{\partial L_i(B_1)}{\partial a_{i,1}^*} = \frac{\partial L_i(B_2)}{\partial a_{i,2}^*} > \frac{\partial L_i(B_1)}{\partial a_{i,2}^*}$, which leads to $a_{i,1}^* > a_{i,2}^*$.
\end{proof}

\section{Experimental Settings}

\subsection{Training settings for Figure~3}
We adopt ResNet50 as the backbone network. Models are trained on MS1Mv2 \cite{guo2016ms,Deng2018} for  20 epochs with batch size 512 and initial learning rate 0.1, dropped by 0.1 every 5 epochs. 512 samples of the last iteration are used for visualization.

\begin{figure*}
  \centering
  \includegraphics[width=0.95\textwidth]{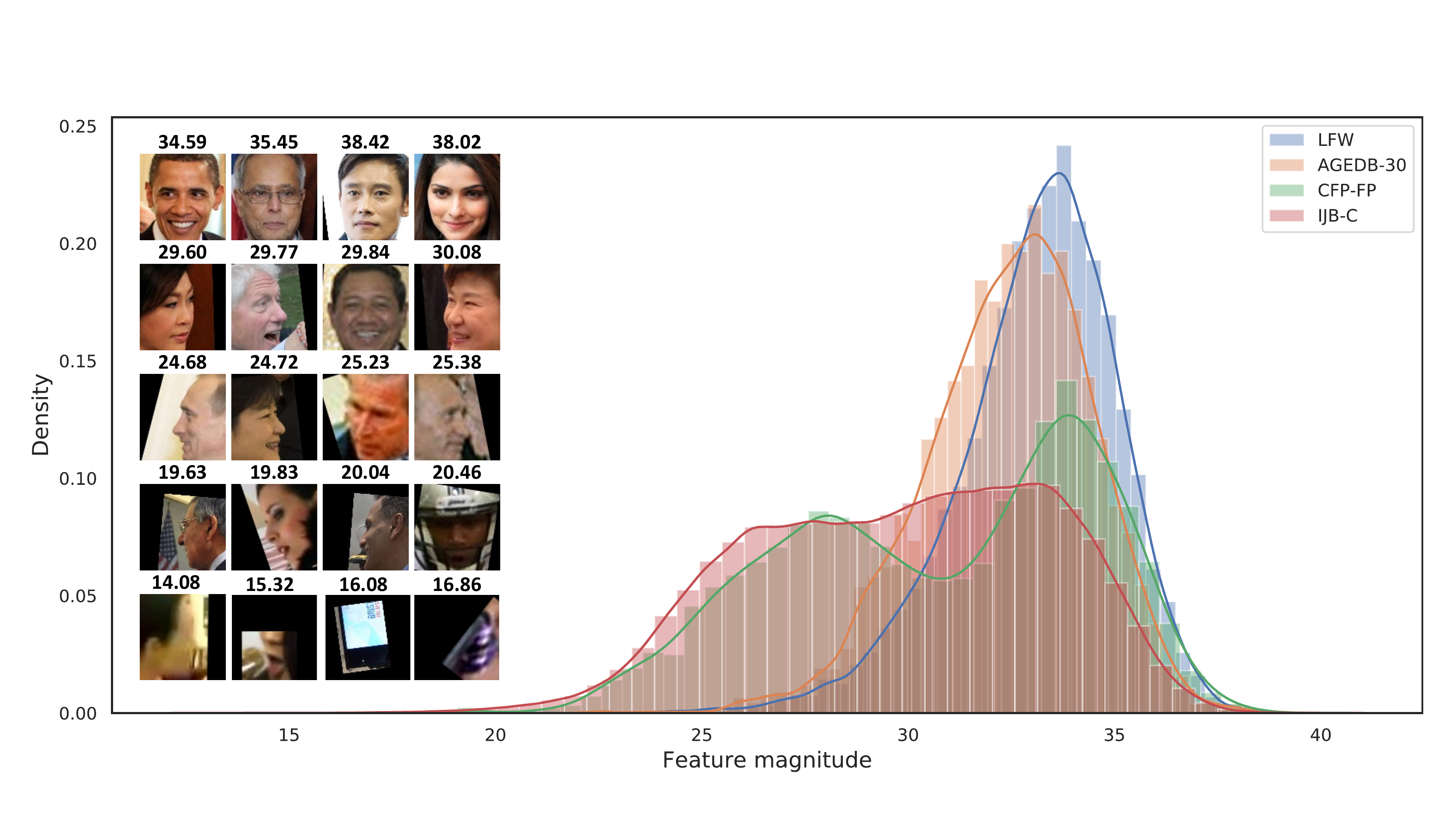}
  \caption{Extended Visualization of Figure~6. }
  \label{fig:dist_qlt_appendix}
\end{figure*}
\setlength{\tabcolsep}{4pt}

\subsection{Settings of $m(a_i)$, $g(a_i)$ and $\lambda_g$}
In our experiments, we define function $m(a_i)$ as a linear function defined on $[l_a, u_a]$ with $m(l_a) = l_m, m(u_a) = u_m$ and  $g(a_i) = \frac{1}{a_i} + \frac{1}{u_a^2}a_i$. Therefore, we have
\begin{equation}
  \resizebox{0.25\textwidth}{!}{$
    \begin{aligned}
     & m(a_i) = \frac{u_m-l_m}{u_a-l_a}(a_i-l_a) + l_m \\
      &\lambda_g \geq \frac{sK}{-g'(l_a)} =  \frac{su_a^2l_a^2}{(u_a^2-l_a^2)} \frac{u_m-l_m}{u_a-l_a}
    \end{aligned}
    $}
  \label{eq:mg}
\end{equation}

\section{Ablation Study on Margin Distributions} \label{ap:ablation}
In this section, effects of the feature distributions during training are studied. With $(\lambda_g, l_a, u_a)$ fixed to $(35, 10, 110)$,  we carefully select various combinations of $l_m, u_m$ to align the mean margin on the training dataset to ArcFace (0.5) in our implementation. Features are distributed more separated if with a larger maximum margin and a smaller minimum margin.

Table~\ref{table:ablation} shows the recognition results with various hyper-parameters. With $(l_m, u_m)=(0.45, 0.65)$, the penalty of magnitude loss degrades the performance of the recognition. With $(l_m, u_m)=(0.25, 1.60)$, the performance is also worse than then baseline as hard samples are assigned to small margins (\textit{a.k.a.}, hard/noisy samples are down-weighted).  Parameter $(0.40, 0.80)$ balances the feature distribution and margins for hard/noisy samples,  and therefore achieves a significant improvement on benchmarks.

\section{Extended Visualization of Figure~6}
We present a extended visualization of figure~6 in figure~\ref{fig:dist_qlt_appendix} which has more examples of faces with feature magnitudes.
All the faces are sample from the IJB-C benchmark.
It can be seen that faces with magnitudes around 28 are mostly profile faces while around 35 are high-quality and frontal faces.
That is consistent with the profile/frontal peaks in the CFP-FP benchmark and indicates that faces with similar magnitudes show similar quality patterns across benchmarks.
In real applications, we can set a proper threshold for the magnitude and should be able to filter similar low-quality faces, even under various scenarios.

Besides directly served as qualities, our feature magnitudes can also be used as quality labels for faces, which avoids human labelling costs.
These labels are more suitable for recognition, and therefore can be used to boost other quality models.



\section{Mag-CosFace}

In the main text, MagFace is modified from the ArcFace loss.
In this section, we show that MagFace based on CosFace loss (denote as Mag-CosFace) can theoretically achieve the same effects.
Mag-CosFace loss for a sample $i$ is
\begin{equation}
  \resizebox{0.4\textwidth}{!}{$
    \begin{split}
      L_i & = -\log \frac{e^{s(\cos{\theta_{y_i}- m(a_i)})}}{ e^{s(\cos{\theta_{y_i}- m(a_i)})} + \sum_{j=1, j\neq y_i}^{n}e^{s \cos{\theta_j}}} \\
      & \hspace{5em} + \lambda_g   g(a_i)
    \end{split}
    $}
  \label{eq:loss2}
\end{equation}
Let $A(a_i) = s (\cos\theta_{y_i}- m(a_i))$ and $B = \sum_{j=1, j\neq y_i}^{n}e^{s \cos{\theta_j}}$ and rewrite the loss as
\begin{equation}
  \resizebox{0.25\textwidth}{!}{$
    L_i = - \log \frac{e^{A(a_i)}}{e^{A(a_i)} + B} + \lambda_g g(a_i)
    $}
\end{equation}

\subsection{Property of Convergence for Mag-CosFace}
\begin{prop}
  \label{prop:convex}
  For $a_i \in [l_a, u_a]$, $L_i$ is a strictly convex function of $a_i$.
\end{prop}
\begin{proof}
  The first and second deriviates of $A(a_i)$ are
  \begin{equation}
    \small
      \begin{aligned}
        A'(a_i) & = -sm'(a_i) \\
        A''(a_i) & = -sm''(a_i)
      \end{aligned}
    \end{equation}
    As $A''(a_i)\leq 0$, the property can be proved following that presented before.

\end{proof}

\begin{prop}
  \label{prop:uniq}
  A unique optimal solution $a_i^*$ exists in $[l_a, u_a]$.
\end{prop}

\begin{proof}
  We only need to prove 
  \begin{equation}
    \resizebox{0.3\textwidth}{!}{$
    \begin{split}
        \frac{\partial L_i}{\partial a_i}(u_a) & = \frac{Bs}{e^{A(a_i)}+B}m'(u_a) > 0 \\
        \frac{\partial L_i}{\partial a_i}(l_a) & = \frac{Bs}{e^{A(a_i)}+B}m'(l_a) + \lambda_g g'(l_a) \\
        & < sK + \lambda_g g'(l_a) \leq 0
      \end{split}
      $}
  \end{equation}
  Then it's easy to have there is a unique optimu.
\end{proof}

\subsection{Property  of Monotonicity for Mag-CosFace}
\begin{prop}
  With fixed $f_i$ and $W_j, j\in\{1,\cdots, n\}, j\neq y_i$, the optimal feature magnitude $a^*_i$ is monotonically increasing if the cosine-distance to its class center $W_{y_i}$ decreases.
\end{prop}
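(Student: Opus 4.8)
The plan is to mirror the monotonicity argument already carried out for the ArcFace-based loss, but to exploit the fact that in the CosFace parametrization the first-order condition is considerably cleaner. First I would record the first derivative of the Mag-CosFace loss. Using $A(a_i)=s(\cos\theta_{y_i}-m(a_i))$ and hence $A'(a_i)=-s\,m'(a_i)$ (the cosine-similarity $\cos\theta_{y_i}$ is constant in $a_i$), the same chain-rule computation that produced the derivative in the convergence proof gives
\begin{equation}
  \frac{\partial L_i}{\partial a_i} = \frac{B\,s\,m'(a_i)}{e^{s(\cos\theta_{y_i}-m(a_i))}+B} + \lambda_g\, g'(a_i).
\end{equation}
The crucial observation is that, unlike the ArcFace case, the numerator $B\,s\,m'(a_i)$ carries no dependence on $\theta_{y_i}$: all of the angular information is confined to the exponential in the denominator. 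Since $B>0$, $s>0$ and $m'(a_i)>0$ (the requirement that $m$ be increasing), this first term is strictly positive.

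Next I would compare the derivatives for two class centers at angles $\theta^1_{y_i}<\theta^2_{y_i}$, so that $W^1_{y_i}$ is the closer center, \ie\ the one with the smaller cosine-distance. Because cosine is strictly decreasing on $[0,\pi]$, we have $\cos\theta^1_{y_i}>\cos\theta^2_{y_i}$, and therefore $e^{s(\cos\theta^1_{y_i}-m(a_i))}+B>e^{s(\cos\theta^2_{y_i}-m(a_i))}+B$. A larger denominator shrinks the positive first term, so for every $a_i\in[l_a,u_a]$ the pointwise inequality
\begin{equation}
  \frac{\partial L_i(\theta^1_{y_i})}{\partial a_i} < \frac{\partial L_i(\theta^2_{y_i})}{\partial a_i}
\end{equation}
holds, since the regularizer contribution $\lambda_g g'(a_i)$ is identical on both sides. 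Note that, in contrast to the ArcFace proof, this step needs no appeal to Lemma~\ref{prop:1} nor to the range $\theta_{y_i}+m(a_i)\in[0,\pi/2]$, because there is no $\sin$ factor whose sign must be controlled.

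Finally I would close the argument exactly as in the earlier monotonicity property. Let $a^*_{i,1},a^*_{i,2}$ denote the optima for the two centers, which exist and are unique by the convergence property for Mag-CosFace. At $a^*_{i,2}$ the first-order optimality condition gives $\partial L_i(\theta^2_{y_i})/\partial a_i=0$, so the pointwise inequality forces $\partial L_i(\theta^1_{y_i})/\partial a_i<0$ there; since $L_i(\theta^1_{y_i})$ is strictly convex its derivative is strictly increasing and vanishes only at $a^*_{i,1}$, hence a strictly negative value at $a^*_{i,2}$ places the zero to its right, \ie\ $a^*_{i,1}>a^*_{i,2}$. The only point requiring care --- and the mildest possible obstacle --- is that this first-order reasoning presumes the optima are interior; this is precisely guaranteed by the boundary-derivative signs $\partial L_i/\partial a_i(l_a)\le 0\le\partial L_i/\partial a_i(u_a)$ established in the existence-and-uniqueness proposition, so no separate treatment of corner solutions is needed.
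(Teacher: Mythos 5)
Your proposal is correct and follows essentially the same route as the paper's own proof: compute $\partial L_i/\partial a_i = \frac{Bs\,m'(a_i)}{e^{s(\cos\theta_{y_i}-m(a_i))}+B} + \lambda_g g'(a_i)$, observe that $\cos\theta^1_{y_i} > \cos\theta^2_{y_i}$ enlarges the denominator and hence gives the pointwise ordering $\frac{\partial L_i(\theta^1_{y_i})}{\partial a_i} < \frac{\partial L_i(\theta^2_{y_i})}{\partial a_i}$, then conclude $a^*_{i,1} > a^*_{i,2}$ from strict convexity and the first-order optimality condition. Your additional remarks --- that no appeal to Lemma~1 or to a $\sin$ factor's sign is needed in the CosFace parametrization, and that interiority of the optima is guaranteed by the boundary-derivative signs --- are correct and merely make explicit what the paper leaves implicit when it defers to the ArcFace-based argument.
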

\begin{proof}

  The first derivate of $L_i$ is
  \begin{equation}
    \resizebox{0.3\textwidth}{!}{$
      \begin{aligned}
        \frac{\partial L_i}{\partial a_i} & = -\frac{B}{e^{A(a_i)}+B} A'(a_i) + \lambda_g g'(a_i) \\
        &= \frac{Bs m'(a_i)}{e^{s (\cos\theta_{y_i}- m(a_i))}+B} + \lambda_g g'(a_i)
      \end{aligned}
      $}
  \end{equation}
  For $\theta_{y_i}^1 < \theta_{y_i}^2$, the core here is to  prove $\frac{\partial L_i(\theta^1_{y_i})}{\partial a_i} < \frac{\partial L_i(\theta^2_{y_i})}{\partial a_i}$, which is obvious as $\cos\theta_{y_i}^1 > \cos\theta_{y_i}^2$.
  The rest of the proofs is the same as those for the original MagFace.  
\end{proof}

\begin{prop}
  With other things fixed, the optimal feature magnitude $a^*_i$ is monotonically increasing with a decreasing $B$ (\ie, increasing inter-class distance).
\end{prop}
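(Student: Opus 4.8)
The plan is to mirror the proof of Property~\ref{prop:mono2} for the original MagFace, since moving from ArcFace to CosFace only changes the form of $A(a_i)$, which here equals $s(\cos\theta_{y_i}-m(a_i))$ and satisfies $A'(a_i)=-sm'(a_i)$. First I would write out the first derivative of the Mag-CosFace loss,
\begin{equation*}
  \frac{\partial L_i}{\partial a_i} = -\frac{B}{e^{A(a_i)}+B}A'(a_i) + \lambda_g g'(a_i) = \frac{Bsm'(a_i)}{e^{s(\cos\theta_{y_i}-m(a_i))}+B} + \lambda_g g'(a_i),
\end{equation*}
and note that the regularizer term $\lambda_g g'(a_i)$ carries no dependence on $B$.

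Next, fixing $a_i$ and $\theta_{y_i}$, I would treat the first term as a function of $B$ and show it is strictly increasing. Since $B \mapsto \frac{B}{e^{A(a_i)}+B}$ has positive derivative $\frac{e^{A(a_i)}}{(e^{A(a_i)}+B)^2}>0$ and $sm'(a_i)>0$, for any $0<B_1<B_2$ we obtain
\begin{equation*}
  \frac{B_1 sm'(a_i)}{e^{s(\cos\theta_{y_i}-m(a_i))}+B_1} < \frac{B_2 sm'(a_i)}{e^{s(\cos\theta_{y_i}-m(a_i))}+B_2}.
\end{equation*}
Because the remaining term is unchanged, this gives $\frac{\partial L_i(B_1)}{\partial a_i} < \frac{\partial L_i(B_2)}{\partial a_i}$ pointwise in $a_i$.

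Finally, I would invoke the strict convexity together with the existence and uniqueness of the optimum established above for Mag-CosFace. Writing $a_{i,1}^*, a_{i,2}^*$ for the optima corresponding to $B_1, B_2$, the first-order conditions yield $0 = \frac{\partial L_i(B_1)}{\partial a_{i,1}^*} = \frac{\partial L_i(B_2)}{\partial a_{i,2}^*} > \frac{\partial L_i(B_1)}{\partial a_{i,2}^*}$; since $\frac{\partial L_i}{\partial a_i}$ is strictly increasing in $a_i$, this forces $a_{i,1}^* > a_{i,2}^*$, so a smaller $B$ (\ie, larger inter-class distance) produces a larger optimal magnitude. I expect no genuine obstacle here: the one step that demanded care in the original MagFace argument was fixing the sign of $\sin(\theta_{y_i}+m(a_i))$ through Lemma~\ref{prop:1}, and that difficulty vanishes because the monotonicity in $B$ rests only on $m'(a_i)>0$ rather than on any trigonometric sign.
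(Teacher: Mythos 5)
Your proposal is correct and follows essentially the same route as the paper: establish the pointwise inequality $\frac{\partial L_i(B_1)}{\partial a_i} < \frac{\partial L_i(B_2)}{\partial a_i}$ for $B_1 < B_2$, then combine strict convexity with the first-order conditions $0 = \frac{\partial L_i(B_1)}{\partial a_{i,1}^*} = \frac{\partial L_i(B_2)}{\partial a_{i,2}^*} > \frac{\partial L_i(B_1)}{\partial a_{i,2}^*}$ to conclude $a_{i,1}^* > a_{i,2}^*$. You additionally spell out why $B \mapsto \frac{B}{e^{A(a_i)}+B}$ is strictly increasing and correctly observe that the sine-sign issue (Lemma~1) is not needed here, both of which the paper leaves implicit.
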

\begin{proof}
  It's easy to have $\frac{\partial L_i(B_1)}{\partial a_i} < \frac{\partial L_i(B_2)}{\partial a_i}$ if $B_1 < B_2$.
  The rest of the proofs is the same as those for the original MagFace.
\end{proof}

\section{Authors' Contributions}
Shichao Zhao and Zhida Huang contribute similarly to this work.
Besides involved in discussions, Shichao Zhao mainly conducted experiments on face clustering and Zhida Huang implemented baselines as well as evaluation metrics for quality experiments.
\end{appendix}
\end{document}